\documentclass{article}

\usepackage{microtype}
\usepackage{graphicx}
\usepackage{subfigure}
\usepackage{booktabs} 
\usepackage[export]{adjustbox}
\usepackage{multirow}
\usepackage{array}
\usepackage[table]{xcolor}
\usepackage{xcolor}
\usepackage[export]{adjustbox}
\usepackage{enumitem}

\usepackage{hyperref}
\usepackage{placeins}

\usepackage[preprint]{icml2026}

\usepackage{amsmath}
\usepackage{amssymb}
\usepackage{mathtools}
\usepackage{amsthm}
\usepackage{float}

\usepackage[capitalize,noabbrev]{cleveref}

\theoremstyle{plain}
\newtheorem{theorem}{Theorem}[section]
\newtheorem{proposition}[theorem]{Proposition}
\newtheorem{lemma}[theorem]{Lemma}

\theoremstyle{definition}
\newtheorem{definition}[theorem]{Definition}

\theoremstyle{remark}

\usepackage[textsize=tiny]{todonotes}

\icmltitlerunning{\textbf{SCAN}: \textbf{S}parse \textbf{C}ircuit \textbf{A}nchor Interpretable \textbf{N}euron for Lifelong Knowledge Editing}
\begin{document}

\twocolumn[
\icmltitle{\textbf{SCAN}: \underline{S}parse \underline{C}ircuit \underline{A}nchor Interpretable \underline{N}euron for Lifelong Knowledge Editing}



  \icmlsetsymbol{equal}{*}

  \begin{icmlauthorlist}
    \icmlauthor{Yuhuan Liu}{1,2}
    \icmlauthor{Haitian Zhong}{1,4}
    \icmlauthor{Xinyuan Xia}{3}
    \icmlauthor{Qiang Liu}{1}
    \icmlauthor{Shu Wu}{1}
    \icmlauthor{Liang Wang}{1}

  \end{icmlauthorlist}

  \icmlaffiliation{1}{New Laboratory of Pattern Recognition (NLPR), State Key Laboratory of Multimodal Artificial Intelligence Systems (MAIS), Institute of Automation, Chinese Academy of Sciences}
  \icmlaffiliation{2}{Cuiying Honors College, Lanzhou University}
  \icmlaffiliation{3}{Research Institute of Intelligent Complex Systems, Fudan University}
  \icmlaffiliation{4}{Zhongguancun Academy}

  \icmlcorrespondingauthor{Shu Wu}{shu.wu@nlpr.ia.ac.cn }


  \vskip 0.3in
]



\printAffiliationsAndNotice{}  

\begin{abstract}
Large Language Models (LLMs) often suffer from catastrophic forgetting and collapse during sequential knowledge editing. This vulnerability stems from the prevailing dense editing paradigm, which treats models as black boxes and relies on coarse-grained parameter interventions that inevitably disrupt preserved knowledge. To address this, we propose SCAN (a sparse editing framework based on \underline{S}parse \underline{C}ircuit \underline{A}nchored \underline{N}euron) which transforms editing into a mechanism-aware manipulation by constructing a knowledge circuit via Sparse Transcoders. Experiments on Gemma2, Qwen3, and Llama3.1 across CounterFact, ZsRE and WikiFactDiff demonstrate that SCAN achieves a superior performance, maintaining model integrity on benchmarks like MMLU and GSM8K even after 3,000 sequential edits, whereas other existing methods deteriorate progressively as editing accumulates, eventually resulting in model collapse.
\end{abstract}

\section{Introduction}

Large Language Models (LLMs) acquire extensive knowledge during pre-training which may become outdated. Given the high costs of re-training, model editing \cite{zhang-etal-2024-knowledge-editing,wang2024knowledge,wang-etal-2024-easyedit} has emerged as an efficient approach for updating specific factual knowledge without altering unrelated knowledge and general competency. For instance, an LLM asserting, ``The current U.S. President is Joe Biden,'' would need correction to, ``The current U.S. President is Donald Trump,'' in 2025. Existing editing techniques fall into two categories: parameter-modifying which directly alter the LLMs' weights, and parameter-preserving methods which introduce auxiliary components to steer model's output. Moreover, to accommodate evolving knowledge, lifelong model editing is proposed, which involves performing sequential edits \cite{gupta-etal-2024-model}.

Despite advances, current editing techniques face systemic limitations that are problematic in the lifelong setting. Chief among these is the issue of coarse editing granularity: methods often \textbf{intervene over large blocks of parameters} \cite{meng2022locating,meng2022mass}. However, the parameters truly essential to a specific fact occupy only a tiny fraction of these regions \cite{jiang2025neuron}. This coarse intervention directly causes the destruction of unrelated knowledge \cite{jiang2025neuron} due to the vector-based storage mechanism in MLP \cite{geva2021transformer} and polysemantic problems \cite{geva2022transformer}. In sequential editing, as these non-minimal intervention accumulate, they inevitably erode previously edited knowledge, triggering catastrophic forgetting and model collapse. Furthermore, the \textbf{lack of transparency} in knowledge pathways hinders reliable diagnosis and surgical refinement, making updates unreliable and uncontrollable \cite{hong2024interpretability,mazzia2024survey}.

To address these coupled challenges, we draw inspiration from the field of Mechanistic Interpretability \cite{bereska2024mechanistic,rai2024practical}, emphasizing \textbf{sparsity} as a promising way to solve them. We argue that sparsity enables finer grained editing through two reasons. Firstly, unlike dense paradigms that intervene in entire weight blocks, sparsity restricts updates to factual related parameter subsets, protecting unrelated knowledge. Secondly, sparsity offers a path toward neuron \textbf{monosemanticity} \cite{cunningham2023sparse,paulo2025transcoders}. Under this paradigm, each feature or neuron represents a single concept rather than multiple meanings \cite{bricken2023monosemanticity}, ensuring that editing a target concept does not propagate changes to unrelated concepts, thereby mitigating the polysemantic issues \cite{geva2022transformer} inherent in editing dense LLMs.

Based on this principle, this paper proposes a novel parameter-preserving editing framework SCAN. Our approach introduces a Sparse Transcoder \cite{dunefsky2024transcoders} which projects LLMs hidden states onto sparse features. We then construct an Attribution Graph by influence score between features and prune it to identify a knowledge circuit. This circuit serves as a roadmap to locate the essential and sparse feature nodes for edit instead of the whole transcoder features. By applying targeted steering to these sparse features, we implement the edit and propagate the refined changes back into the LLMs. Our main contributions are summarized as follows:

\begin{enumerate}
    \item We introduce sparsity as a core principle to resolve catastrophic forgetting and model collapse in sequential edit by restricting updates to fact-specific parameter subsets. This paradigm also fosters neuron monosemanticity, ensuring that each feature represents a discrete concept to mitigate the polysemantic interference inherent in dense models.
    \item We propose SCAN, a novel white-box sparse editing framework driven by Sparse Transcoders and Attribution Graphs. By identifying specific knowledge circuits, SCAN provides a robust and interpretable solution for the model editing field.
    \item We conduct a series of experiments to investigate sequential scalability, precise knowledge localization, and the semantic profiling of functional features across multiple LLM families. Our evaluations demonstrate that these sparse interventions successfully maintain model integrity and general capabilities even under long-term editing stress.
\end{enumerate}

\section{Preliminary}
\subsection{Lifelong Editing and Steer-based Method}
Model editing updates a model $f_{W}$ using a triple $e=(s, r, o \to o^*)$ where $s$ denotes the subject entity, $r$ the relation, $o$ the original factual object, and $o^*$ the desired target new object. In a lifelong setting, a Model Editor (ME) recursively produces $f_{W_t} = \text{ME}(f_{W_{t-1}}, x_t, y_t)$ to incorporate $n$ sequential updates without forgetting. Steer-based methods achieve this by modifying hidden states $h$ instead of weights \cite{zhong-etal-2025-react}. They typically utilize a Decision Mechanism (e.g., Euclidean distance to stored keys $\mathbb{K}_i$) to trigger a Perturbation Mechanism: $h' = h + \Delta h$ if an edit is required, and $h' = h$ otherwise \cite{hartvigsen2023aging,yu2024melo}.

\subsection{Editing Mechanism: MLPs as Key-Value Memory}
The Transformer MLP is conceptualized as a key-value memory \cite{geva2021transformer}, where the encoder weights $W_{\text{enc}}$ act as patterns (keys) and the decoder weights $W_{\text{dec}}$ act as stored knowledge (values). Physically, the input $h_{\text{pre}}$ is compared against $W_{\text{enc}}$ (keys) to produce an activation $a = \sigma(W_{\text{enc}} \cdot h_{\text{pre}})$, where each $a_j$ represents the response intensity that the input belongs to the $j$-th knowledge slot. The final output $v$ is a weighted retrieval of these values: $v = W_{\text{dec}} \cdot a = \sum a_j v_j$, which modifies the residual stream $h_{\text{post}} = h_{\text{pre}} + v$ \cite{elhage2021mathematical}. Traditional parameter-modifying methods (e.g., MEMIT, AlphaEdit) achieve editing by updating the decoder weights $W'_{\text{dec}} = W_{\text{dec}} + \Delta W_{\text{dec}}$ to encode new facts. Mathematically, this transformation is equivalent to injecting an additive perturbation $\Delta h = \Delta W_{\text{dec}} \cdot a$ into the residual stream, providing a unified view for both weight-based and steer-based editing.

\subsection{Sparse Transcoder and Monosemanticity}

The Sparse Transcoder is an auxiliary module that reformulates the MLP's memory into a highly sparse feature activation $z$ \cite{paulo2025transcoders} (compared with $a$ in MLP). It is trained to reconstruct the MLP output while enforcing sparsity via an $\ell_1$ penalty:
\[
\begin{aligned}
\mathcal{L}_{\text{Transcoder}} &=
\text{MSE}\big(\text{MLP}(h_{\text{pre}}), W_{\text{dec}}^{\text{tc}} \cdot z\big)
+ \lambda \lVert z \rVert_1, \\
z &= \text{ReLU}(W_{\text{enc}}^{\text{tc}} \cdot h_{\text{pre}})
\end{aligned}
\]

Unlike standard MLPs that suffer from polysemantic neurons, the Transcoder promotes monosemanticity \cite{bricken2023monosemanticity}.

\section{Edit with Sparse Circuit Anchored Neuron 
}

\begin{figure*}[t]
    \centering

    \includegraphics[width=0.98\linewidth, trim=10pt 2pt 10pt 2pt, clip]{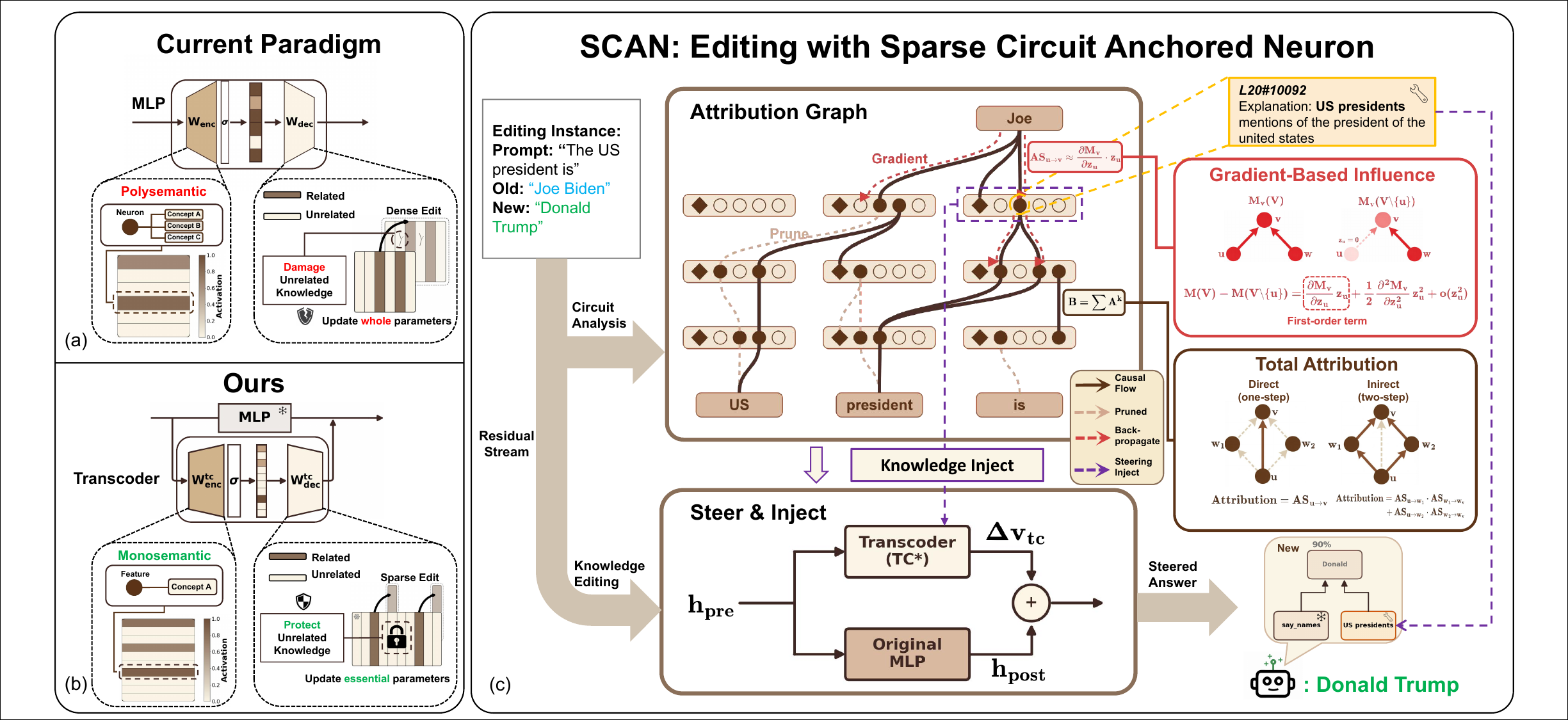}
    \caption{Comparison of current methods and ours. Current methods (a) modify the entire dense MLP weight matrix. Our approach (b) isolates factual features, editing knowledge-relevant vectors.}
    \label{fig:placeholder}
\end{figure*}

This section details SCAN. We define a single edit as $e = (s, r, o \to o^*)$. The method first constructs the knowledge circuit responsible for the original output $o$, based on the activated features in the Sparse Transcoder and attribution score between feature nodes. The essential features are then extracted to locate the knowledge and we edit the corresponding decoder weight vectors that enforce the target output $o^*$. Finally, the difference between the transcoder's original output and the edited output ($\Delta v_{\text{tc}}$) is injected into the LLMs as a ``steering vector'' at the exact circuit location. 

\subsection{Attribution Graph Construction}

The initial phase involves constructing an Attribution Graph for the old knowledge answer $o$, defined by the editing instance $e$. We execute a forward pass with the prompt to process it with the LLM. The hidden states ($h_{pre}$) before MLP from each layer and the token positions are then inputted into the corresponding transcoders to record the features with positive activation. A weighted complete graph is then constructed, formally defined as follows:

\begin{definition}[Initiation of Attribution Graph]
For an editing instance $e$, the corresponding Attribution Graph is initiated as a weighted complete graph: 
\[
G = (V, E, AS)
\]
where $V$ is a set of nodes representing all components potentially causal to the original output $o$ and is partitioned as: $V = V_{\text{embed}} \cup V_{\text{feature}} \cup V_{\text{error}} \cup V_{\text{logit}}$, here $V_{\text{embed}}$ denotes the token embedding node; $V_{\text{feature}}$ comprises the feature with positive activation $z_i$; $V_{\text{error}}$ contains nodes corresponding to the MLP reconstruction error; $V_{\text{logit}}$ denotes the logit node associated with the original object token $o$.
$E = \{(u, v) \mid u, v \in V, u \neq v\}$ connects every pair of nodes, and $AS : E \rightarrow \mathbb{R}$ assigns an attribution score representing the causal influence between nodes. In the initiated graph, all attribution scores are set to $0$.
\end{definition}
The visualization of this period can be seen in Appendix C.

\subsection{Gradient-Based Node Influence Computation}

With the complete Attribution Graph $G=K(V)$ established, we quantify the causal influence between the nodes. Suppose $u, v \in V$ are two nodes, where $u$ is located at layer $j$ and token position $m$, and $v$ is located at layer $i$ and token position $n$ $(i>j, n\geq m)$. Let $z_u$ and $z_v$ denote the corresponding activations respectively (for error and embedding nodes, the activations are set to 1). The attribution score $AS_{u \to v}$ is computed as the difference in the metric $M$ of node $v$ when the path through $u$ is corrupted while other paths are kept. Specifically, the influence of $u$ on $v$ is given by $AS_{u \to v} =  M_v(V)-M_v(V \setminus \{u\})$, here  $M_v(V)$ represents the value of the metric $M$ (e.g., absolute activation or activation relative to the mean; in our experiments, we use absolute activation which is $\text{ReLU}((f_{\text{enc}}^{v})^{\top} \cdot h_{\text{pre}}^{i})$) at node $v$ during the original forward pass. $M_v(V \setminus \{u\})$ represents the value of $M$ at node $v$ when the path through node $u$ is corrupted.

Direct computation of the attribution score leads to an immense computational cost. To address this, we use the first-order term of the Taylor expansion to approximate it, where removing node $u$ is equivalent to setting its activation $z_u$ to zero:
\[
AS_{u \to v} \approx \frac{\partial{M_v}}{\partial{z_u}} \cdot z_u
\]

To efficiently compute the gradient $\frac{\partial M_v}{\partial z_u}$, we utilize the Chain Rule across the Transformer layers, only requiring backpropagation to the output ( $h_{\text{post}}^{j} = h^{j}_{\text{pre}} + h^{j}_{\text{error}} + \sum_k z_k f^{k}_{\text{dec}}$) after the MLP. This results in a simple expression for the gradient: $\frac{\partial M_v}{\partial z_u} = \frac{\partial M_v}{\partial h_{\text{post}}^{j}} \frac{\partial h_{\text{post}}^{j}}{\partial z_u} = \frac{\partial M_v}{\partial h_{\text{post}}^{j}} f^{u}_{\text{dec}}$

We can further provide a mathematical interpretation of this process. In fact, the causal influence computation can be seen as evaluating the similarity between vectors via a dot product. By further expanding $\frac{\partial M_v}{\partial h_{\text{post}}^{j}} f^{u}_{\text{dec}}$, we get:
\begin{align*}
\frac{\partial M_v}{\partial h_{\text{post}}^{j}} f^{u}_{\text{dec}} 
&= \frac{\partial M_v}{\partial h_{\text{pre}}^{i}} \frac{\partial h_{\text{pre}}^{i}}{\partial h_{\text{post}}^{j}} f^{u}_{\text{dec}}\\& = (f_{\text{enc}}^{v})^{\top} \frac{\partial h_{\text{pre}}^{i}}{\partial h_{\text{post}}^{j}} f^{u}_{\text{dec}}
\end{align*}
This decomposed form is interpreted as the similarity between feature $u$ and feature $v$ in the pre-MLP space of the layer $i$. The vector $f^{u}_{\text{dec}}$, which encodes the value information stored by feature $u$ at layer $j$, is first transformed into the pre-MLP space of layer $i$ via the Jacobian matrix $\frac{\partial h_{\text{pre}}^{i}}{\partial h_{\text{post}}^{j}}$. This Jacobian matrix maps the decoder vector $f^u_{\text{dec}}$ from vector representation in the layer $j$ post-MLP space into its corresponding vector representation in the layer $i$ pre-MLP space. We formalize this relationship in the following proposition, which ensures that such a transformation across different spaces is well-defined (the proof is provided in Appendix B):
\begin{proposition}[Jacobian as the Optimal Direction-Preserving Linearization]
Let $X,Y\subset \mathbb{R}^n$ be two spaces and let
$
f: X \to Y
$
be a mapping such that $f(0)=0$ and $f$ is differentiable at every point $x_0\in X$ with Jacobian matrix $J_f(x_0)$ and $J_f$ is continuous and non-singular at $0$. Then, we have
\[
\left\|
\frac{f(x_0)}{\|f(x_0)\|}
-
\frac{J_f(x_0)x_0}{\|J_f(x_0)x_0\|}
\right\|
\;\xrightarrow[x_0\to 0]{}\; 0
\]
This implies that the direction of any vector transformed by $f$ is closely aligned with the direction induced by the Jacobian transformation.

\end{proposition}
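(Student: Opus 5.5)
The plan is to compare both $f(x_0)$ and $J_f(x_0)x_0$ with the single linear object $A x_0$, where $A := J_f(0)$ is non-singular by hypothesis. Both vectors should equal $Ax_0 + o(\|x_0\|)$. Since $\|Ax_0\| \ge \sigma_{\min}(A)\|x_0\|$ with $\sigma_{\min}(A)>0$, the normalized vectors will then agree up to $o(1)$. Throughout we take $x_0\neq 0$ and restrict to a neighbourhood of $0$, where both denominators are nonzero; this is established in the first two steps.

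First, differentiability of $f$ at $0$ together with $f(0)=0$ gives
\[
f(x_0) = A x_0 + r_1(x_0), \qquad \frac{\|r_1(x_0)\|}{\|x_0\|} \to 0 .
\]
Second, continuity of $J_f$ at $0$ gives $\|J_f(x_0)-A\|_{\mathrm{op}} \to 0$. Hence
\[
J_f(x_0)x_0 = A x_0 + r_2(x_0), \qquad \|r_2(x_0)\| \le \|J_f(x_0)-A\|_{\mathrm{op}}\,\|x_0\| = o(\|x_0\|).
\]
Combined with $\|Ax_0\| \ge \sigma\|x_0\|$, where $\sigma=\sigma_{\min}(A)>0$, this shows both vectors are nonzero for small $x_0\neq 0$. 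It also gives $\|f(x_0)\|\ge (\sigma - o(1))\|x_0\|$, and the same lower bound for $\|J_f(x_0)x_0\|$.

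Third, we use the elementary inequality that for nonzero $u,w$,
\[
\Big\|\frac{u}{\|u\|}-\frac{w}{\|w\|}\Big\| \le \frac{2\|u-w\|}{\|u\|}.
\]
It follows by inserting $w/\|u\|$ and using $\bigl|\|u\|-\|w\|\bigr|\le\|u-w\|$. Apply it with $u=f(x_0)$ and $w=J_f(x_0)x_0$. Then $\|u-w\| \le \|r_1\|+\|r_2\| = o(\|x_0\|)$, while $\|u\| \ge (\sigma-o(1))\|x_0\|$. The ratio is therefore $o(1)/(\sigma - o(1)) \to 0$, which is the claim.

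The main obstacle is the denominator: one needs a linear lower bound on $\|f(x_0)\|$, not just $f(x_0)\neq 0$. This is exactly where non-singularity of $J_f(0)$ enters, through $\sigma_{\min}(A)>0$. I would also note explicitly that the statement is read for $x_0\to 0$ with $x_0\neq 0$ in a punctured neighbourhood of $0$ (assuming $0$ is an interior point of $X$ or at least a limit point). Continuity of $J_f$ is used only at $0$, to control the second remainder $r_2$.
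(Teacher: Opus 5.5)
Your proof is correct and shares the paper's skeleton: the remainder $f(x_0)-J_f(x_0)x_0=o(\|x_0\|)$, the normalization inequality, and a linear lower bound on the denominator from non-singularity of $J_f(0)$. However, you control the remainder differently, and the difference matters.

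The paper expands $f$ around the moving base point $x_0$ and sets $h=-x_0$ to get $f(x_0)=J_f(x_0)x_0+\tilde r(x_0)$. It then asserts $\|r_{x_0}(-x_0)\|/\|x_0\|\to 0$ as $x_0\to 0$ as if this followed from differentiability. It uses continuity of $J_f$ at $0$ only to bound the denominator, $\|J_f(x_0)x_0\|\ge m\|x_0\|$. That assertion does not follow from pointwise differentiability. The little-$o$ at base point $x_0$ concerns $h\to 0$ with $x_0$ fixed, not the joint limit $h=-x_0\to 0$. For example, $f(x)=x+x^2\sin(1/x)$ on $\mathbb{R}$ is differentiable everywhere with $f'(0)=1$, yet $(f(x)-f'(x)x)/x=\cos(1/x)-x\sin(1/x)$ has no limit. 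This does not contradict the proposition, since $f'$ is discontinuous at $0$, but it shows the paper's stated justification is insufficient.

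Your anchoring at $J_f(0)x_0$ is exactly what justifies the paper's step. You write $f(x_0)-J_f(x_0)x_0=r_1(x_0)-r_2(x_0)$, with $r_1$ controlled by differentiability at $0$ and $r_2$ by continuity of $J_f$ at $0$. This makes clear that continuity of $J_f$ is needed for the numerator as well as the denominator. You also check that $f(x_0)\neq 0$ near $0$, which the statement requires and the paper leaves implicit. Dividing by $\|f(x_0)\|$ (as you do) or by $\|J_f(x_0)x_0\|$ (as the paper does) is immaterial, since the inequality is symmetric in the two vectors.

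The paper's framing gives a direct reading of $f(x_0)$ as the linearization at the current point, which matches its interpretive narrative. Yours gives an argument that is complete under the stated hypotheses. It uses differentiability only at $0$, together with existence and continuity of $J_f$ there.
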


The transformed vector is then compared using a dot product with the key vector $(f_{\text{enc}}^{v})^{\top}$. Thus, the attribution score measures the similarity between the transformed value vector from feature of $u$ and the required key vector of $v$, all within the $i$ layer pre-MLP space.

\subsection{Prune by Total (one-step and multi-step) Attribution}
To derive the sparse causal subgraph $G'$ from the dense Attribution Graph $G$, we employ a pruning strategy based on previous attribution scores ($AS_{u \to v}$), which measure the direct (one-step) causal effect of node $u$ on node $v$.

\begin{definition}[Direct (one-step) Attribution Matrix]
Let $A \in \mathbb{R}^{|V| \times |V|}$ denote the adjacency matrix of $G$, where the element in row $v$ and column $u$ is: 
\[
a_{v,u} = AS_{u \to v},
\] 
representing the one-step attribution from node $u$ to node $v$.  
Denote the $A$ written in column-block form as
 $(a_1, a_2, \dots, a_{|V|})$, or equivalently in row-block form as $(b_1^\top, b_2^\top, \dots, b_{|V|}^\top)^\top$
\end{definition}

\paragraph{Two-step Attribution.} Analogous to a full-derivative expansion:
\begin{theorem}[Full-derivative expansion]
Consider a function \(y = y(x_1, x_2, \dots, x_n)\), where each \(x_i\) is itself a function of \(z\), i.e., \(x_i = x_i(z)\). Then the derivative of \(y\) with respect to \(z\) can be expressed using partial derivatives as
\[
\frac{\partial y}{\partial z} = \sum_{i=1}^n \frac{\partial y}{\partial x_i} \frac{\partial x_i}{\partial z}.
\]
\end{theorem}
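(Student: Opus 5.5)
This is the multivariable chain rule. The plan is to reduce it to the definition of the derivative via a first-order expansion of $y$ around the point $x(z_0)$. We assume $y$ is differentiable (as a function of $x=(x_1,\dots,x_n)$) at $x(z_0)$ and each $x_i$ is differentiable at $z_0$; these are the standing hypotheses implicit in the statement.

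\textbf{Step 1.} Differentiability of $y$ at $x_0 := x(z_0)$ lets us write
\[
y(x_0+k)-y(x_0)=\sum_{i=1}^n \frac{\partial y}{\partial x_i}(x_0)\,k_i+\|k\|\,\varepsilon(k),
\]
where $\varepsilon(k)\to 0$ as $k\to 0$ and we set $\varepsilon(0)=0$.

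\textbf{Step 2.} Substitute $k=k(h):=x(z_0+h)-x(z_0)$. Each $x_i$ is differentiable at $z_0$, so $k_i(h)/h\to \frac{\partial x_i}{\partial z}(z_0)$. In particular $k(h)\to 0$, and $\|k(h)\|/|h|$ stays bounded as $h\to 0$.

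\textbf{Step 3.} Divide the expansion by $h$ and let $h\to 0$. The linear part converges to $\sum_i \frac{\partial y}{\partial x_i}\frac{\partial x_i}{\partial z}$. The remainder is
\[
\frac{\|k(h)\|}{|h|}\,\varepsilon(k(h)),
\]
a bounded factor times a factor tending to $0$, so it vanishes. This yields the stated formula.

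The only delicate point is the remainder in Step 3. The danger is that $k(h)$ may equal $0$ for some $h\neq 0$, so one cannot naively divide by $\|k(h)\|$. Writing the expansion with the factor $\|k\|\varepsilon(k)$ and defining $\varepsilon(0)=0$ avoids this issue.

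We also note that mere existence of the partial derivatives $\partial y/\partial x_i$ would not suffice; full (Fréchet) differentiability of $y$, or continuity of its partials, is needed. We state this assumption explicitly, since in the attribution-graph setting $y$ and the $x_i$ are smooth compositions of network layers, almost everywhere for ReLU.
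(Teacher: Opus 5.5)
Your proof is correct. It is the standard proof of the chain rule along a one-parameter curve. Writing the remainder as $\|k\|\,\varepsilon(k)$ with $\varepsilon(0)=0$ is exactly what handles parameters $h\neq 0$ with $k(h)=0$, so no division by $\|k(h)\|$ is needed.

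The paper gives no proof of this statement. Appendix B proves only the Jacobian direction-alignment proposition and the closed form $B=(I-A)^{-1}-I$. The paper quotes the full-derivative expansion as the textbook chain rule, solely to motivate reading $(A^2)_{v,u}=\sum_i a_{v,w_i}a_{w_i,u}$ as a sum over two-step paths. There is therefore no competing route to compare with.

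Your version is the more careful one, because it makes explicit the hypotheses the paper's statement omits. The outer function $y$ must be differentiable at $x(z_0)$; existence of its partial derivatives alone is not enough. Each $x_i$ need only be differentiable at $z_0$. If $z$ is vector-valued, each $x_i$ needs only a partial derivative in the relevant coordinate, since your argument uses only the restriction to a line $z_0+h e_j$.
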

which implies that the change in \(y\) with respect to \(z\) can be decomposed into contributions from the changes in each intermediate variable \(x_i\), we begin with the simplest case of indirect influence: two-step attribution. This measures the effect of a node $u$ on a node $v$ that is mediated by a single intermediate node. We sum the influence over all paths of length two (i.e., through all possible intermediate nodes \(w_i\)) given by $\sum_{i} \text{AS}_{u \to w_i} \cdot \text{AS}_{w_i \to v}$, which in vector-matrix notation corresponds to:
\[
(A^2)_{v,u} = \sum_{i} a_{v,w_i} a_{w_i,u} = b_v^\top \cdot a_u
\]
Thus, the adjacency matrix representing all two-step attributions in the graph is precisely $A^2$.

\paragraph{Three-step and Total Attribution.} Following the same principle, three-step attribution is computed recursively. It quantifies the influence transmitted through two intermediate nodes. We can conceptualize this as the sum of products of the two-step attribution from the source node $u$ to an intermediate node $\hat{w}_i$, and one-step attribution from $\hat{w}_i$ to $v$. (i.e. $\sum_{i} \left( \sum_{k} \text{AS}_{u \to w_k} \cdot \text{AS}_{w_k \to \hat{w}_i} \right) \cdot \text{AS}_{\hat{w}_i \to v}
$). Similar to situation in two-step, the three-step adjacency matrix is $A^3$. 

The total adjacency matrix, denoted by \(B\), accumulating the attribution scores from all direct and indirect paths. It is defined as the sum of the adjacency matrix for all path lengths from one-step to infinity. This results in a matrix series:$B = A + A^2 + A^3 + \dots$. In fact, we have the following proposition to calculate it (The proof will be shown in Appendix B, with implementation details, including convergence and feasibility, provided in Appendix C):

\begin{proposition}[Closed-form Total Attribution Matrix]
Let $A$ be the adjacency matrix of one-step attribution scores with $\|A\| < 1$. Then the total Attribution Matrix $B$, which accumulates contributions from all paths of any length, admits the closed-form solution: 
\[
B = (I - A)^{-1} - I
\]
\end{proposition}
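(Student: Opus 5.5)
The plan is the standard Neumann-series argument, with $\|\cdot\|$ taken to be any submultiplicative matrix norm (for instance the operator norm induced by a vector norm). Put $S_N = \sum_{k=0}^{N} A^k$. Submultiplicativity gives $\|A^k\| \le \|A\|^k$. Since $\|A\|<1$, the scalar series $\sum_k \|A\|^k$ converges. The matrix series $\sum_{k\ge 0} A^k$ is therefore absolutely convergent in the complete space $\mathbb{R}^{|V|\times|V|}$, so $S_N \to S$ for some matrix $S$. In particular, the series defining the total attribution matrix $B = A + A^2 + \cdots$ converges, and $B = S - I$.

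Next I would use the telescoping identity
\[
(I-A)S_N = S_N(I-A) = I - A^{N+1}.
\]
We have $\|A^{N+1}\| \le \|A\|^{N+1} \to 0$, and matrix multiplication is continuous. Letting $N\to\infty$ therefore gives $(I-A)S = S(I-A) = I$. So $I-A$ is invertible with $(I-A)^{-1} = S$. Invertibility also follows independently: if $(I-A)x=0$ with $x\neq 0$, then $\|x\| = \|Ax\| \le \|A\|\,\|x\| < \|x\|$, a contradiction. Subtracting the $k=0$ term then yields $B = S - I = (I-A)^{-1} - I$, as claimed.

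The only real obstacle is interpretive rather than technical. The statement does not say which norm is meant, and the argument needs the norm to be submultiplicative (or at least to satisfy $\|A^k\|\le C\|A\|^k$). I would fix an induced operator norm at the outset and remark that the conclusion holds more generally whenever the spectral radius $\rho(A)<1$, since every such $A$ has some induced norm below $1$.

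I would also note a structural fact. Edges only go from lower to higher layers (and non-decreasing token positions), so one expects $A$ to be nilpotent after a suitable ordering of the nodes. In that case the series terminates and the identity holds without any norm assumption. This is a reassuring sanity check, but the proof above does not rely on it.
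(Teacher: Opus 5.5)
Your proof is correct and is essentially the paper's Neumann-series argument. The paper also uses $\|A^k\|\le\|A\|^k\to 0$, shows $I-A$ is invertible by ruling out the eigenvalue $1$, and passes to the limit in the telescoping identity $(I+A+\cdots+A^k)(I-A)=I-A^{k+1}$. The only difference is the order of steps. The paper proves invertibility first and reads off convergence from $S_k=(I-A)^{-1}-A^{k+1}(I-A)^{-1}$, which avoids any completeness argument. You instead get convergence from absolute convergence and let invertibility fall out in the limit, which is equally valid.
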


For any target node $v$, We first sort all nodes $u$ that have a path to $v$ in descending order of $B_{v,u}$ and normalize these scores. Edge pruning is then performed using a cumulative threshold $\tau$: starting from the highest-ranked node, scores are sequentially accumulated. Once the cumulative sum reaches $\tau$, all remaining edges are discarded for they are less influential. Nodes that do not attribute to any other node after edge pruning is removed. After pruning all edges and nodes, we get $G'$. 

\subsection{Sparse Edit and Knowledge Inject}

This final step uses the features identified in $G'$ that are active on the subject's last token for direct model editing and testing. During the editing phase, the transcoder decoder vectors $f^{u}_{\text{dec}}$ corresponding to the these features $u \in G’$ are selected and edited. The difference between before and after the edit ($\sum_u z_u\cdot(\hat{f}^{u}_{\text{dec}}-f^{u}_{\text{dec}})$) is calculated and injected into the LLMs. The optimization objective is set to minimize the negative log-probability of the target output, $\hat{W}^{tc}_{dec} = argmin_{W_{\text{dec}}^{\text{tc}}} \left( - \log P(o^*| x) \right)$. The edited decoders are then saved during sequential edit. In the testing phase, these identified features serve as the steering triggers. We use the Jaccard Similarity Score $J(V_A, V_B)$, where $V_A$ denotes the set of edited features associated with the target knowledge recorded by the transcoder during the editing phase, and $V_B$ denotes the set of selected features extracted from the attribution graph of the test prompt. If the similarity exceeds a predefined threshold, the edited features relevant to the corresponding knowledge are used for injection. The score, calculated as $J(V_A, V_B) = \frac{|V_A \cap V_B|}{|V_A \cup V_B|}$ quantifies the overlap between the selected feature sets.

\section{Experiments}

In this section, we conduct extensive experiments to evaluate the performance of our proposed method and address the following research questions:

\begin{itemize}
    \item \textbf{RQ1:} How does our method perform in sequential editing scenarios?
    \item \textbf{RQ2:} Where does the model encode factual knowledge within its parameters?
    \item \textbf{RQ3:} What is the semantic meaning of the identified features for editing? 
    \item \textbf{RQ4:} How does the model maintain its general capabilities after editing? 
\end{itemize}
\subsection{Experimental Setup}
We summarize the LLMs, baseline methods, Transcoders, datasets, and evaluation metrics used in our experiments. Further details are provided in Appendix A.

\textbf{LLMs \& Baseline Methods.} We conducted experiments using three widely adopted LLMs: \texttt{Gemma2-2B}, \texttt{Qwen3-8B}, and \texttt{Llama3.1-8B-Instruct}. For comparison, we evaluated our method against several editing baselines, including Fine-Tuning (FT) \cite{zhu2020modifying}, MEMIT \cite{meng2022mass}, RECT \cite{gu-etal-2024-model}, AlphaEdit \cite{ICLR2025_29c8c615}, GRACE \cite{hartvigsen2023aging}, and MELO \cite{yu2024melo}.

\textbf{Transcoder.} We adopt publicly available pretrained transcoder checkpoints without additional training. Specifically, we use the Gemma2-2B-transcoders and Qwen3-8B-transcoders \cite{dunefsky2024transcoders,circuit-tracer}, as well as the Llama3.1-8B-Instruct-transcoders \cite{zhao2025verifying}. These checkpoints are used as-is throughout all experiments.

\textbf{Datasets.} To evaluate the performance of knowledge editing, we employed three standard benchmarks: CounterFact \cite{meng2022locating}, ZsRE \cite{levy-etal-2017-zero}, and WikiFactDiff \cite{ammar-khodja-etal-2024-wikifactdiff-large}. Furthermore, to assess the general capabilities of the model post-edit, we tested the edited models on six general datasets, including MMLU \cite{hendryckstest2021}.

\textbf{Evaluation Metrics.} In line with prior research, we assess performance using three key metrics: \textbf{Rel} (Reliability, also known as Edit Success Rate), \textbf{Gen} (Generalization Success Rate), and \textbf{Loc} (Locality Success Rate). 

\subsection{How does our method perform in sequential editing scenarios? (RQ1)}

\begin{table*}[t]
\centering
\caption{Sequential editing task performance comparison of our method and other methods after 1000 edits. The \textbf{Avg} column is calculated using the Harmonic Mean: $\text{Avg} = 3 / (\text{Rel}^{-1} + \text{Gen}^{-1} + \text{Loc}^{-1})$, to illustrate balanced performance across metrics. \textbf{Bold} and \underline{underline} denote the best and second-best results per column, respectively.}
\label{tab:final_results}

\renewcommand{\arraystretch}{1.0}
\setlength{\tabcolsep}{10pt}

\newcommand{\lighttext}[1]{{\color{black!70}#1}}

\resizebox{\textwidth}{!}{%
\Large 
\begin{tabular}{cc cccc | cccc | cccc} 

\toprule[1.5pt]
\multirow{2}{*}{\textbf{Method}} & \multirow{2}{*}{\textbf{Model}} &
\multicolumn{4}{c}{\textbf{CounterFact}} &
\multicolumn{4}{c}{\textbf{ZsRE}} &
\multicolumn{4}{c}{\textbf{WikiFactDiff}} \\
\cmidrule(lr){3-6} \cmidrule(lr){7-10} \cmidrule(lr){11-14}
& & \textbf{Rel} $\uparrow$ & \textbf{Gen} $\uparrow$ & \textbf{Loc} $\uparrow$ & \textbf{Avg} $\uparrow$
& \textbf{Rel} $\uparrow$ & \textbf{Gen} $\uparrow$ & \textbf{Loc} $\uparrow$ & \textbf{Avg} $\uparrow$
& \textbf{Rel} $\uparrow$ & \textbf{Gen} $\uparrow$ & \textbf{Loc} $\uparrow$ & \textbf{Avg} $\uparrow$ \\
\midrule[1pt]

FT & \multirow{7}{*}{\rotatebox{90}{Gemma2-2B}} & \lighttext{44.28} & \lighttext{11.10} & \lighttext{11.77} & \lighttext{16.52}
& \lighttext{66.14} & \lighttext{57.20} & \lighttext{64.67} & \lighttext{62.43}
& \lighttext{73.07} & \lighttext{\underline{69.53}} & \lighttext{41.24} & \lighttext{56.97} \\
RECT & & \lighttext{7.23} & \lighttext{1.63} & \lighttext{20.90} & \lighttext{3.86}
& \lighttext{15.32} & \lighttext{11.58} & \lighttext{18.38} & \lighttext{14.56}
& \lighttext{35.78} & \lighttext{31.50} & \lighttext{31.64} & \lighttext{32.97} \\
AlphaEdit & & \lighttext{55.38} & \lighttext{18.35} & \lighttext{32.63} & \lighttext{28.91}
& \lighttext{73.88} & \lighttext{\underline{60.22}} & \lighttext{47.71} & \lighttext{59.11}
& \lighttext{76.92} & \lighttext{67.26} & \lighttext{55.73} & \lighttext{65.88} \\
MEMIT & & \lighttext{5.00} & \lighttext{2.80} & \lighttext{4.40} & \lighttext{3.77}
& \lighttext{10.91} & \lighttext{9.16} & \lighttext{7.61} & \lighttext{9.07}
& \lighttext{5.09} & \lighttext{4.68} & \lighttext{6.30} & \lighttext{5.32} \\
GRACE & & \textbf{100} & \lighttext{0.37} & \textbf{99.80} & \lighttext{1.10}
& \lighttext{\underline{98.80}} & \lighttext{23.56} & \textbf{100} & \lighttext{46.55}
& \lighttext{\underline{98.30}} & \lighttext{51.20} & \textbf{99.07} & \lighttext{75.50} \\
MELO & & \lighttext{69.92} & \lighttext{\underline{42.02}} & \lighttext{45.57} & \lighttext{\underline{49.97}}
& \lighttext{69.05} & \lighttext{56.11} & \lighttext{92.28} & \lighttext{\underline{69.43}}
& \lighttext{78.55} & \lighttext{68.82} & \lighttext{\underline{95.15}} & \lighttext{\underline{79.37}} \\
\rowcolor{red!10}
\textbf{SCAN (Ours)} & & \textbf{100} & \textbf{89.28} & \lighttext{\underline{91.97}} & \textbf{93.53}
& \textbf{100} & \textbf{97.87} & \textbf{100} & \textbf{99.28}
& \textbf{100} & \textbf{93.29} & \lighttext{90.46} & \textbf{94.43} \\
\specialrule{1.5pt}{1.5pt}{0.8pt}
\specialrule{1.5pt}{0.8pt}{1.5pt}
FT & \multirow{7}{*}{\rotatebox{90}{Qwen3-8B}} & \lighttext{6.15} & \lighttext{3.10} & \lighttext{5.05} & \lighttext{4.39}
& \lighttext{20.70} & \lighttext{20.21} & \lighttext{18.46} & \lighttext{19.79}
& \lighttext{26.11} & \lighttext{25.12} & \lighttext{26.60} & \lighttext{25.94} \\
RECT & & \lighttext{42.60} & \lighttext{27.75} & \lighttext{2.80} & \lighttext{7.23}
& \lighttext{30.44} & \lighttext{28.84} & \lighttext{11.16} & \lighttext{18.39}
& \lighttext{11.97} & \lighttext{9.99} & \lighttext{0.81} & \lighttext{2.18} \\
AlphaEdit & & \lighttext{91.90} & \lighttext{25.90} & \lighttext{74.50} & \lighttext{44.71}
& \lighttext{96.77} & \lighttext{\underline{77.35}} & \lighttext{87.45} & \lighttext{\underline{86.96}}
& \lighttext{70.41} & \lighttext{60.93} & \lighttext{68.50} & \lighttext{66.39} \\
MEMIT & & \lighttext{9.30} & \lighttext{5.40} & \lighttext{0.10} & \lighttext{0.29}
& \lighttext{36.50} & \lighttext{32.19} & \lighttext{20.37} & \lighttext{28.02}
& \lighttext{1.18} & \lighttext{0.74} & \lighttext{6.18} & \lighttext{1.64} \\
GRACE & & \textbf{100} & \lighttext{0.65} & \textbf{99.98} & \lighttext{1.94}
& \textbf{100} & \lighttext{27.40} & \textbf{100} & \lighttext{51.60}
& \lighttext{\underline{99.85}} & \lighttext{46.12} & \textbf{98.84} & \lighttext{70.34} \\
MELO & & \lighttext{88.00} & \lighttext{\underline{32.45}} & \lighttext{64.55} & \lighttext{\underline{50.94}}
& \lighttext{79.06} & \lighttext{66.74} & \lighttext{99.46} & \lighttext{80.79}
& \lighttext{72.40} & \lighttext{\underline{61.00}} & \lighttext{\underline{97.10}} & \lighttext{\underline{75.31}} \\
\rowcolor{red!10}
\textbf{SCAN (Ours)} & & \textbf{100} & \textbf{98.25} & \lighttext{\underline{92.95}} & \textbf{96.97}
& \textbf{100} & \textbf{95.36} & \textbf{100} & \textbf{98.43}
& \textbf{100} & \textbf{89.29} & \lighttext{90.30} & \textbf{92.97} \\
\specialrule{1.5pt}{1.5pt}{0.8pt}
\specialrule{1.5pt}{0.8pt}{1.5pt}
FT & \multirow{7}{*}{\rotatebox{90}{Llama3.1-8B}} & \lighttext{37.20} & \lighttext{12.60} & \lighttext{1.95} & \lighttext{4.79}
& \lighttext{56.59} & \lighttext{47.04} & \lighttext{4.97} & \lighttext{11.38}
& \lighttext{73.07} & \lighttext{68.28} & \lighttext{19.42} & \lighttext{32.23} \\
RECT & & \lighttext{10.35} & \lighttext{8.10} & \lighttext{0.40} & \lighttext{1.16}
& \lighttext{6.18} & \lighttext{5.45} & \lighttext{2.63} & \lighttext{4.18}
& \lighttext{2.61} & \lighttext{1.93} & \lighttext{1.48} & \lighttext{1.95} \\
AlphaEdit & & \lighttext{97.65} & \lighttext{39.55} & \lighttext{45.85} & \lighttext{56.68}
& \lighttext{94.48} & \lighttext{\underline{82.41}} & \lighttext{78.14} & \lighttext{\underline{84.91}}
& \lighttext{92.87} & \lighttext{\underline{86.88}} & \lighttext{64.06} & \lighttext{78.07}\\
MEMIT & & \lighttext{0.00} & \lighttext{0.00} & \lighttext{0.60} & \lighttext{0.00}
& \lighttext{1.19} & \lighttext{0.82} & \lighttext{3.69} & \lighttext{1.41}
& \lighttext{0.04} & \lighttext{0.04} & \lighttext{0.18} & \lighttext{0.06} \\
GRACE & & \textbf{100} & \lighttext{1.00} & \textbf{99.80} & \lighttext{2.94}
& \lighttext{\underline{99.85}} & \lighttext{27.35} & \textbf{100} & \lighttext{51.44}
& \lighttext{\underline{99.84}} & \lighttext{59.13} & \textbf{99.41} & \lighttext{79.17} \\
MELO & & \lighttext{90.05} & \lighttext{\underline{59.70}} & \lighttext{48.05} & \lighttext{\underline{63.67}}
& \lighttext{88.48} & \lighttext{70.19} & \lighttext{88.61} & \lighttext{81.83}
& \lighttext{84.33} & \lighttext{73.63} & \lighttext{\underline{92.24}} & \lighttext{\underline{83.11}} \\
\rowcolor{red!10}
\textbf{SCAN (Ours)} & & \textbf{100} & \textbf{86.70} & \lighttext{\underline{95.10}} & \textbf{93.58}
& \textbf{100} & \textbf{91.07} & \lighttext{\underline{99.88}} & \textbf{96.84}
& \textbf{100} & \textbf{87.45} & \lighttext{89.05} & \textbf{92.07} \\
\bottomrule[1.5pt]
\end{tabular}
}
\end{table*}

To assess our method, we evaluate across 1,000 sequential edits. The batch size for all batch editing methods is set to 100. As demonstrated in Table \ref{tab:final_results}, our method outperforms all baselines almost in all metrics. Notably, SCAN provides a balanced solution by sustaining near-perfect scores across all three dimensions compared with others.

\subsection{Where does the model encode factual knowledge within its parameters? (RQ2)}
\begin{figure}[t]
    \centering
    \subfigure[]{
        \includegraphics[width=0.46\columnwidth,
        trim=5pt 0pt 5pt 0pt,
        clip]{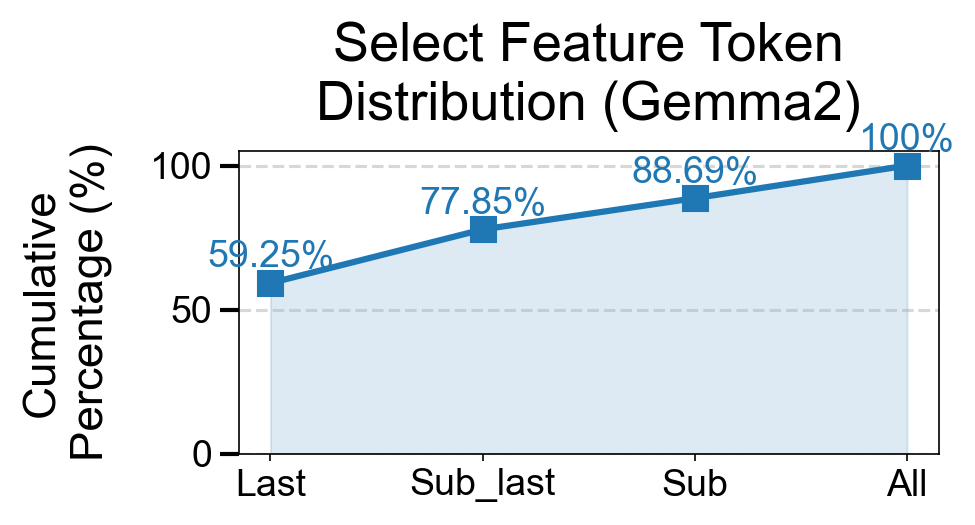}
        \label{fig:dist_gemma}
    }
\hspace{-0.1em}
    \subfigure[]{
        \includegraphics[width=0.46\columnwidth,
        trim=5pt 0pt 5pt 0pt,
        clip]{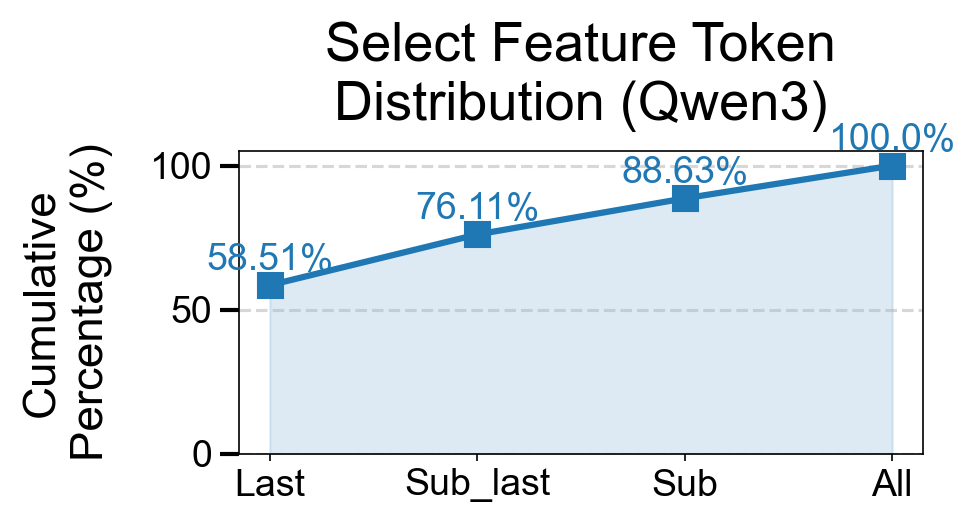}
        \label{fig:dist_qwen}
    }
    \caption{Cumulative proportion of selected feature across different token positions. (a) and (b) represent the distribution for Gemma2-2B and Qwen3-8B on CounterFact dataset, respectively.}
    \label{fig:token_distribution_combined}
\end{figure}

To investigate the internal localization of factual knowledge, we perform an analysis on the first 1,000 cases of the CounterFact dataset using Gemma2-2B and Qwen3-8B. Unlike prior work that performs localization only at the layer level \cite{meng2022locating,zhang2024locate}, our approach enables finer-grained localization, identifying specific subcomponents. We analyze the characteristics of key nodes from both the positional and layer-wise dimensions. The example Attribution Graph can be see in Appendix C.

\textbf{Positional Localization.} As illustrated in Figure \ref{fig:token_distribution_combined}, the attribution of factual knowledge is highly concentrated at last token contributing approximately half of the total nodes. When combined with the last token of the subject, these two positions cumulatively account for over 75\% of the entire Attribution Graph. In contrast, nodes at other positions, such as those within the prompt or earlier parts of the subject, exhibit lower contributions. This indicates that the subject's boundary and the final token are the dominant positions influencing the model's inference.

\begin{figure}[t]
    \centering
    \subfigure[]{
        \includegraphics[width=0.46\columnwidth,
        trim=5pt 0pt 5pt 0pt,
        clip]{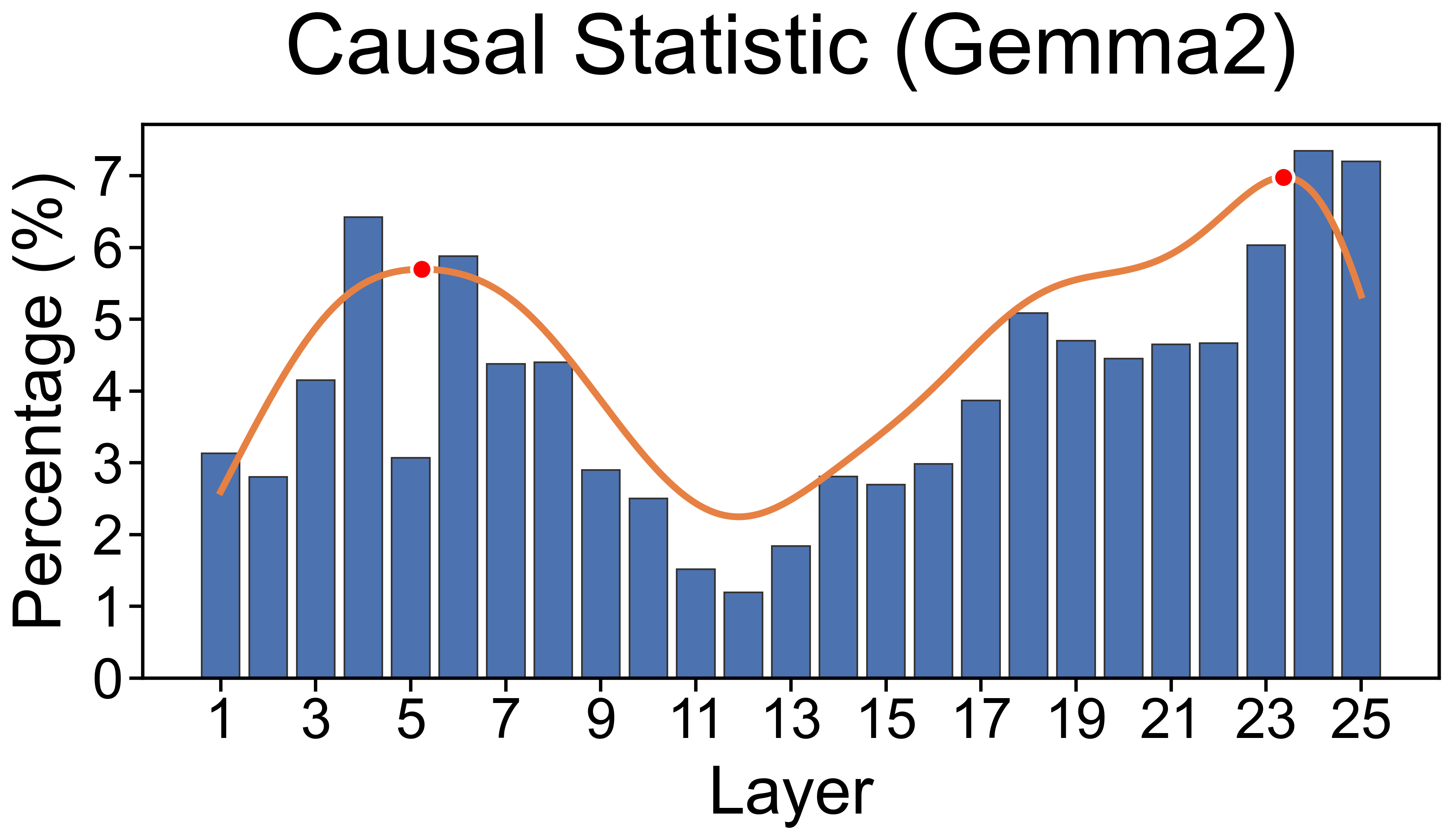}
        \label{fig:layer_gemma}
    }
\hspace{-0.1em}
    \subfigure[]{
        \includegraphics[width=0.46\columnwidth,
        trim=5pt 0pt 5pt 0pt,
        clip]{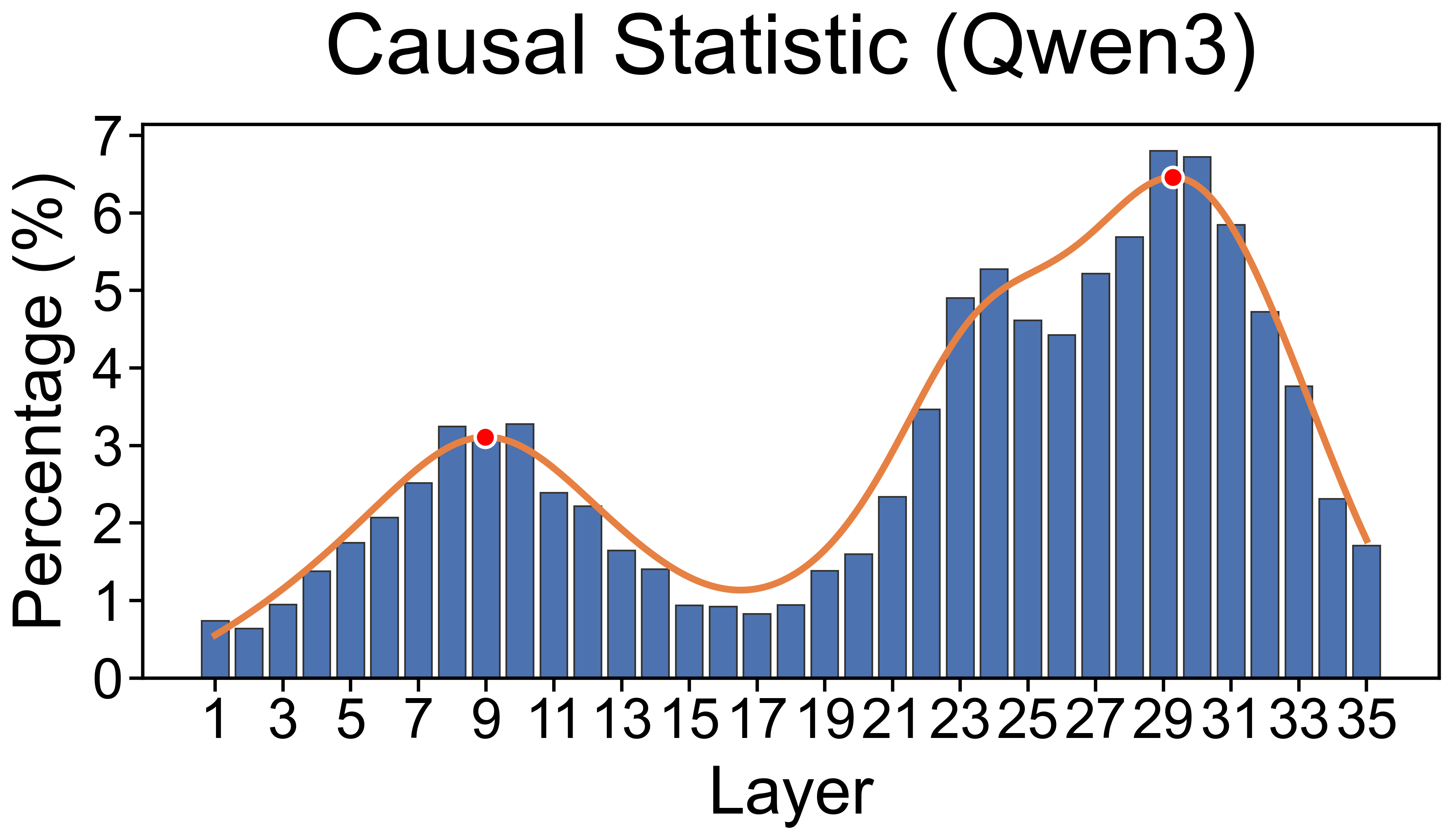}
        \label{fig:layer_qwen}
    }
    \caption{Distribution of selected feature across layers. Both models exhibit a characteristic dual-peak pattern, indicating functional localization in shallow and middle-to-deep layers.}
    \vspace{0em}
    \label{fig:layer_distribution_combined}
\end{figure}

\begin{figure}[t]
    \centering
    \subfigure[]{
        \includegraphics[width=0.40\textwidth, trim=3pt 3pt 3pt 3pt, clip]{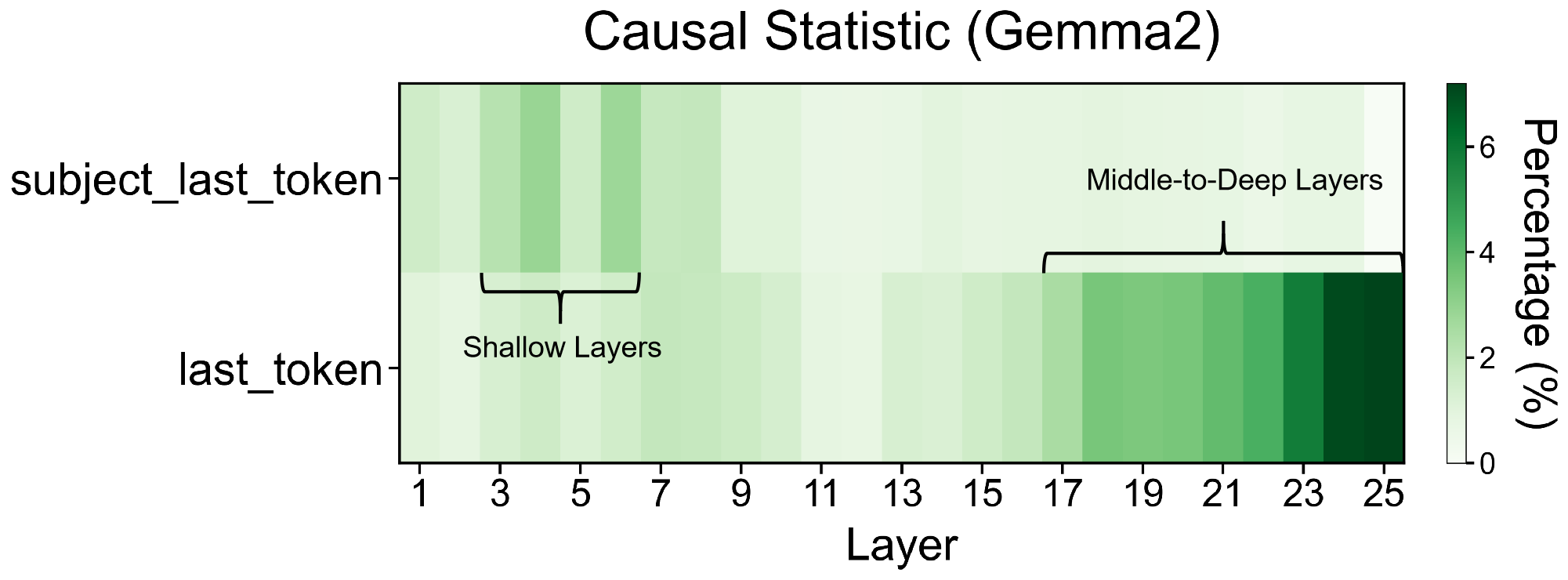}
        \label{fig:heatmap_gemma}
    }
    \subfigure[]{
        \includegraphics[width=0.40\textwidth, trim=3pt 3pt 3pt 3pt, clip]{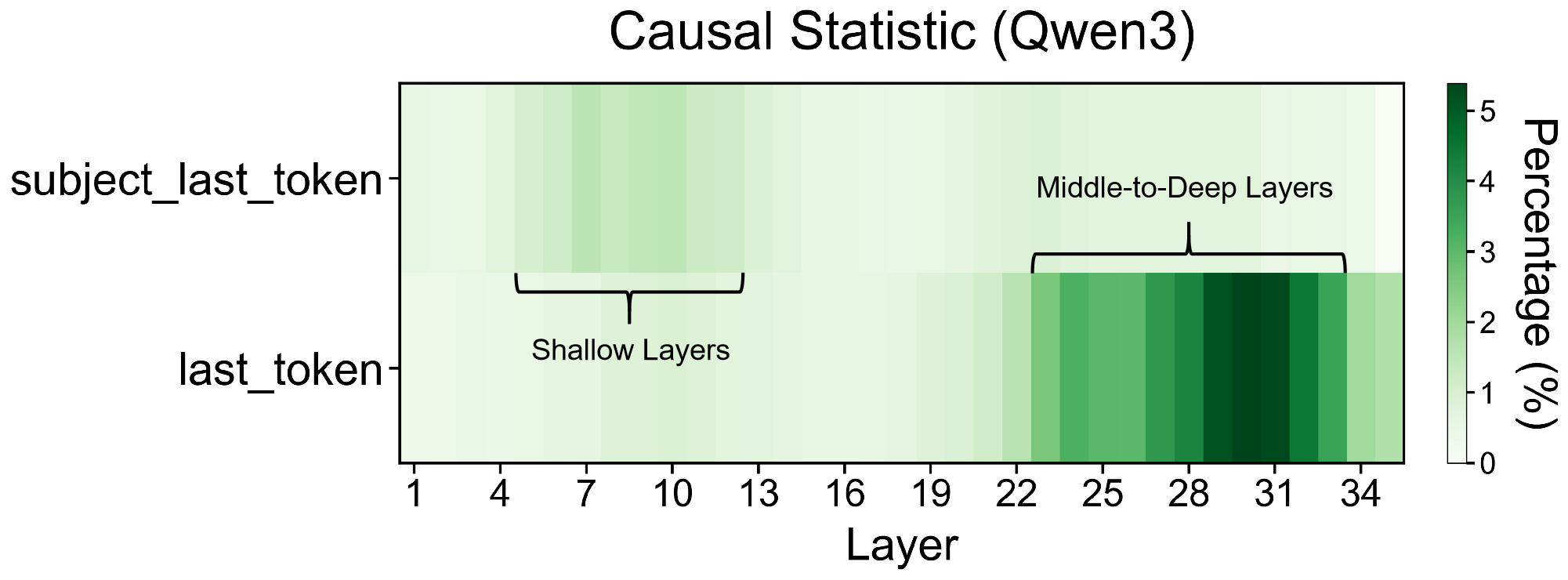}
        \label{fig:heatmap_qwen}
    }
    \caption{Heatmap of selected feature distribution across layers at special token position. The dark regions indicate that the early-layer peaks in Figure \ref{fig:layer_distribution_combined} align with the subject tokens, while the later-layer peaks correspond to the last token position on both models.}
    \vspace{0em}
    \label{fig:heatmaps_combined}
\end{figure}
\textbf{Layer-wise Distribution and Functional Specialization.} From the perspective of depth, the percentage of attribution nodes exhibits a distinct dual-peak pattern across layers, as shown in Figure \ref{fig:layer_distribution_combined}. To further decouple the functions of these peaks, we analyze the distribution of features activated at the subject last token and the last token separately using a heatmap (Figure \ref{fig:heatmaps_combined}). Our analysis reveals that the two peaks observed in the global distribution correspond to the deeper concentrations in the heatmap:
\begin{itemize}[topsep=0pt, itemsep=0pt, parsep=0pt]
    \item \textbf{Shallow Layers:} The peak here primarily corresponds to the \textit{subject last token}, where MLP layers focus on representing and stabilizing the subject's identity.
    \item \textbf{Middle-to-Deep Layers:} The second peak is dominated by the \textit{last token} position, where MLPs integrate the relation information with the subject's representation to extract the target answer.
\end{itemize}

In summary, these findings suggest a ``Subject-to-Answer'' pipeline: shallow layers encode the subject's information, while middle-to-deep layers leverage the relation to finalize the factual retrieval at the terminal token.

\subsection{What is the semantic meaning of the identified features for editing? (RQ3)}
\begin{table}[ht]
    \centering
    \caption{Comparison of feature discriminability across different selection methods using Qwen3-8B and Gemma2-2B on the CF dataset.}
    \label{tab:discrimination}

    \setlength{\heavyrulewidth}{0.12em} 
    \setlength{\lightrulewidth}{0.1em}  
    \setlength{\cmidrulewidth}{0.08em}  
    \setlength{\arrayrulewidth}{0.5pt}

    \resizebox{\columnwidth}{!}{%

        \renewcommand{\arraystretch}{1.2} 
        \setlength{\tabcolsep}{2.5pt}    
        \fontsize{9.5pt}{11pt}\selectfont

        \begin{tabular}{c cccc | cccc} 
            \toprule
            \multirow{2}{*}{\textbf{Method}} & \multicolumn{4}{c|}{\textbf{Rel-Gen}} & \multicolumn{4}{c}{\textbf{Rel-Loc}} \\
            \cmidrule(lr){2-5} \cmidrule(lr){6-9}
            & J-scr & J-acc & F1-scr & F1-acc & J-scr & J-acc & F1-scr & F1-acc \\
            \midrule
            \multicolumn{9}{c}{Qwen3-8B} \\
            \midrule
            All      & 0.258 & 0.421 & 0.394 & 0.423 & 0.297 & 0.409 & 0.445 & 0.394 \\
            Sub      & 0.594 & 0.992 & 0.731 & 0.994 & 0.162 & 0.931 & 0.272 & 0.928 \\
            No\_last & 0.346 & 0.689 & 0.496 & 0.694 & 0.151 & 0.880 & 0.249 & 0.877 \\
            Both     & 0.574 & 0.978 & 0.714 & 0.979 & 0.094 & 0.955 & 0.161 & 0.954 \\
            \midrule
            \multicolumn{9}{c}{Gemma2-2B} \\
            \midrule
            All      & 0.251 & 0.414 & 0.383 & 0.424 & 0.300 & 0.404 & 0.449 & 0.389 \\
            Sub      & 0.509 & 0.938 & 0.659 & 0.939 & 0.153 & 0.921 & 0.258 & 0.917 \\
            No\_last & 0.312 & 0.573 & 0.454 & 0.583 & 0.160 & 0.861 & 0.265 & 0.854 \\
            Both     & 0.491 & 0.926 & 0.642 & 0.927 & 0.118 & 0.938 & 0.203 & 0.937 \\
            \bottomrule
        \end{tabular}%
    }
\end{table}

\begin{figure*}[t]
    \centering

    \makebox[0.44\textwidth][c]{\small \textbf{Feature \#13366 at Layer 19}}
    \hfill
    \makebox[0.44\textwidth][c]{\small \textbf{Feature \#410 at Layer 24}}

    \subfigure[Reliability]{
    \makebox[0.42\textwidth][c]{%
        \adjustbox{trim=0 {\dimexpr0.3\height\relax} 0 {\dimexpr0.3\height\relax},clip}{%
            \includegraphics[width=0.48\textwidth]{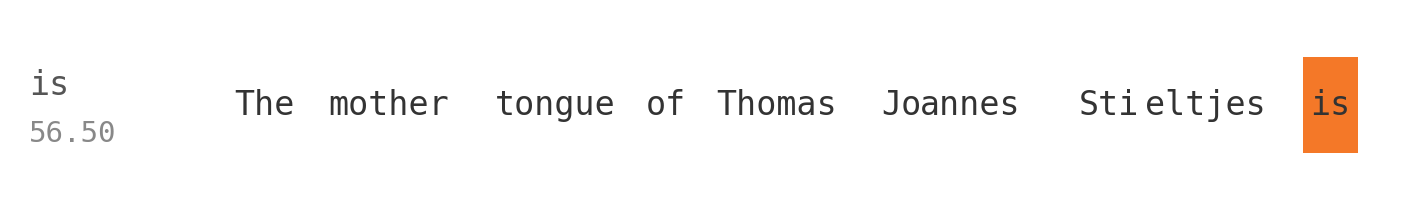}
        }
    }}
    \hfill
    \subfigure[Reliability]{
    \makebox[0.42\textwidth][c]{%
        \adjustbox{trim=0 {\dimexpr0.3\height\relax} 0 {\dimexpr0.3\height\relax},clip}{%
            \includegraphics[width=0.48\textwidth]{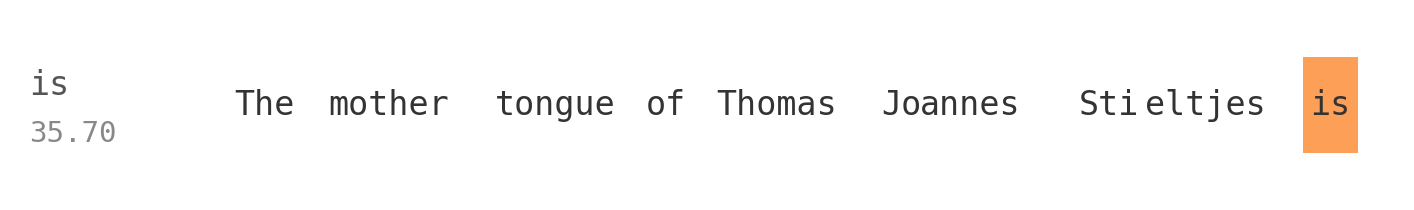}
        }
    }}

    \subfigure[Generality]{
    \makebox[0.42\textwidth][c]{%
        \adjustbox{trim=0 {\dimexpr0.3\height\relax} 0 {\dimexpr0.3\height\relax},clip}{%
            \includegraphics[width=0.48\textwidth]{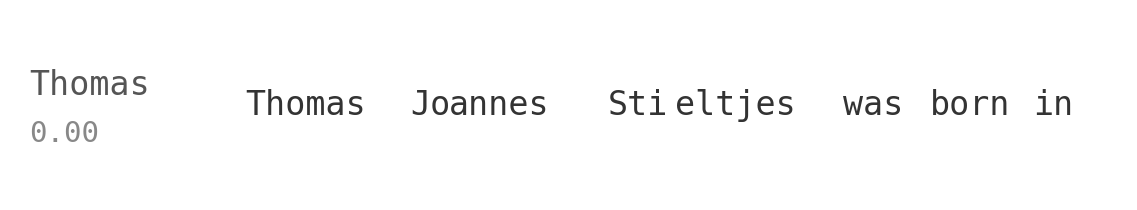}
        }
    }}
    \hfill
    \subfigure[Generality]{
    \makebox[0.42\textwidth][c]{%
        \adjustbox{trim=0 {\dimexpr0.3\height\relax} 0 {\dimexpr0.3\height\relax},clip}{%
            \includegraphics[width=0.48\textwidth]{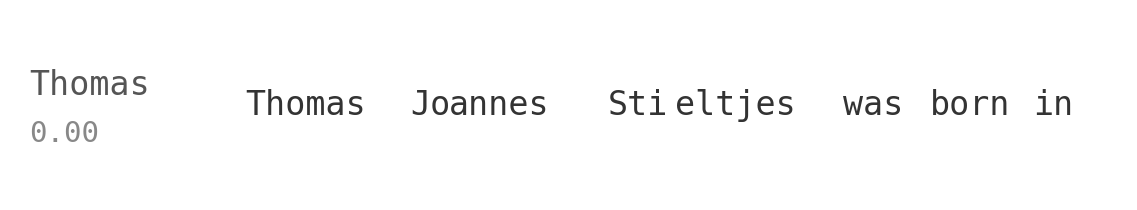}
        }
    }}

    \subfigure[Locality]{
    \makebox[0.42\textwidth][c]{%
        \adjustbox{trim=0 {\dimexpr0.3\height\relax} 0 {\dimexpr0.3\height\relax},clip}{%
            \includegraphics[width=0.48\textwidth]{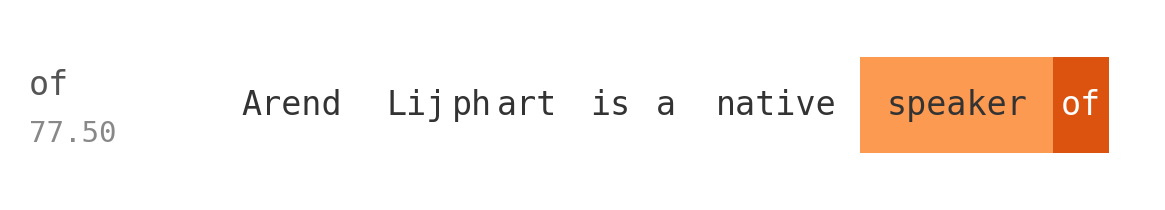}
        }
    }}
    \hfill
    \subfigure[Locality]{
    \makebox[0.42\textwidth][c]{%
        \adjustbox{trim=0 {\dimexpr0.3\height\relax} 0 {\dimexpr0.3\height\relax},clip}{%
            \includegraphics[width=0.48\textwidth]{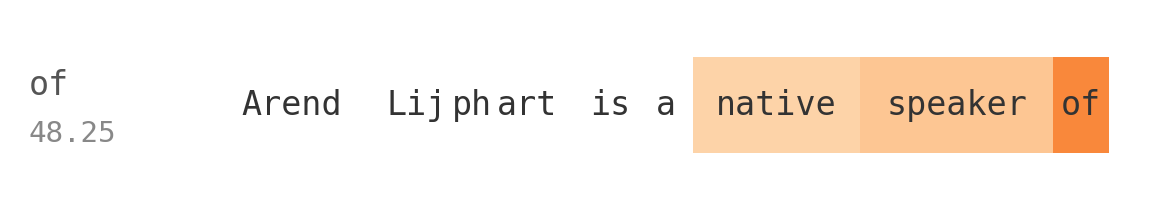}
        }
    }}

    \caption{Activation visualization of identified features on the specific prompts. 
    The left column shows Feature \#13366 at Layer 19, and the right column shows Feature \#410 at Layer 24. 
    Darker colors indicate higher activation values.}
    \label{fig:feature_activation_matrix_v2}
\end{figure*}
Not all nodes captured by Attribution Graph are suitable. These nodes often represent a mixture of \textit{edit-specific} features (tied to the unique fact) and \textit{general-purpose} features (common semantic categories). Indiscriminately editing the latter would lead to overfitting. To illustrate this, we examine a reliability prompt ``\textit{The mother tongue of Thomas Joannes Stieltjes is}'' and a locality prompt ``\textit{Arend Lijphart is a native speaker of}'' (both targeting the answer \textit{Dutch}). Our attribution analysis (see Figure \ref{fig:feature_activation_matrix_v2}) reveals that both prompts activate feature \#13366 at layer 19 and feature \#410 at layer 24. By projecting their decoder vectors onto the unembedding matrix, we extract the top-5 tokens with the highest logits:

\begin{itemize}[topsep=0pt, itemsep=0pt, parsep=0pt]
    \item \textbf{Layer 19, \#13366:} Spanish, English, Arabic, bahasa, and Hindi.
    \item \textbf{Layer 24, \#410:} Korean, Japanese, Indonesian, Vietnamese and Russian.
\end{itemize}
These nodes clearly represent the general concept of ``language'' rather than the specific fact being edited. Crucially, these general features do not activate at any position within the subject of the sentence, nor are they triggered by rephrased prompts such as ``\textit{Thomas Joannes Stieltjes was born in}''. This observation suggests that features relevant to the specific factual update are uniquely concentrated within the subject's boundary. More case study and visualization can be found in Appendix E.

Based on these insights, we design three filtering rules to extract specialized features by constraining their activation positions: (\textit{i}) Subject-Anchored Features (Sub): activated at the subject last token; (\textit{ii}) Terminal-Exclusionary Features (No\_last): not activated at the last token; and (\textit{iii}) Both: meeting both criteria. We evaluate the discriminative power of these rules on the first 1,000 cases of CounterFact. As shown in Table \ref{tab:discrimination}, we measure the average feature overlap between prompts using Jaccard and F1 scores. We record the average score and define discriminative accuracy based on thresholds (0.25 for Jaccard, 0.4 for F1, roughly half overlap between the sets). Our findings show that features activated at the subject last token (Sub) provide the highest discriminative scores. This confirms that the subject's terminal position is the primary locus for edit-specific factual information.

\subsection{How does the model maintain its general capabilities after editing? (RQ4)}

The experimental results across six diverse benchmarks—including ARC \cite{allenai:arc}, CommonsenseQA \cite{talmor-etal-2019-commonsenseqa}, GSM8K \cite{cobbe2021gsm8k}, MMLU \cite{hendryckstest2021}, OpenBookQA \cite{mihaylov-etal-2018-suit}, and SciQ \cite{welbl-etal-2017-crowdsourcing}. The experiments demonstrate that our method exhibits robustness in preserving model's general capabilities during editing process. As edit number scales up to 3,000, our approach maintains pre-existing ability. In contrast, other methods like RECT and MEMIT suffer from a collapse as the editing number increases, with their accuracy plummeting toward zero after 2,000 edits. Details are shown in Figure \ref{fig:downstream_benchmarks}.

\begin{figure*}[t]
    \centering
    \subfigure[]{
        \includegraphics[width=0.29\textwidth,
        trim=15pt 0pt 15pt 0pt,
        clip]{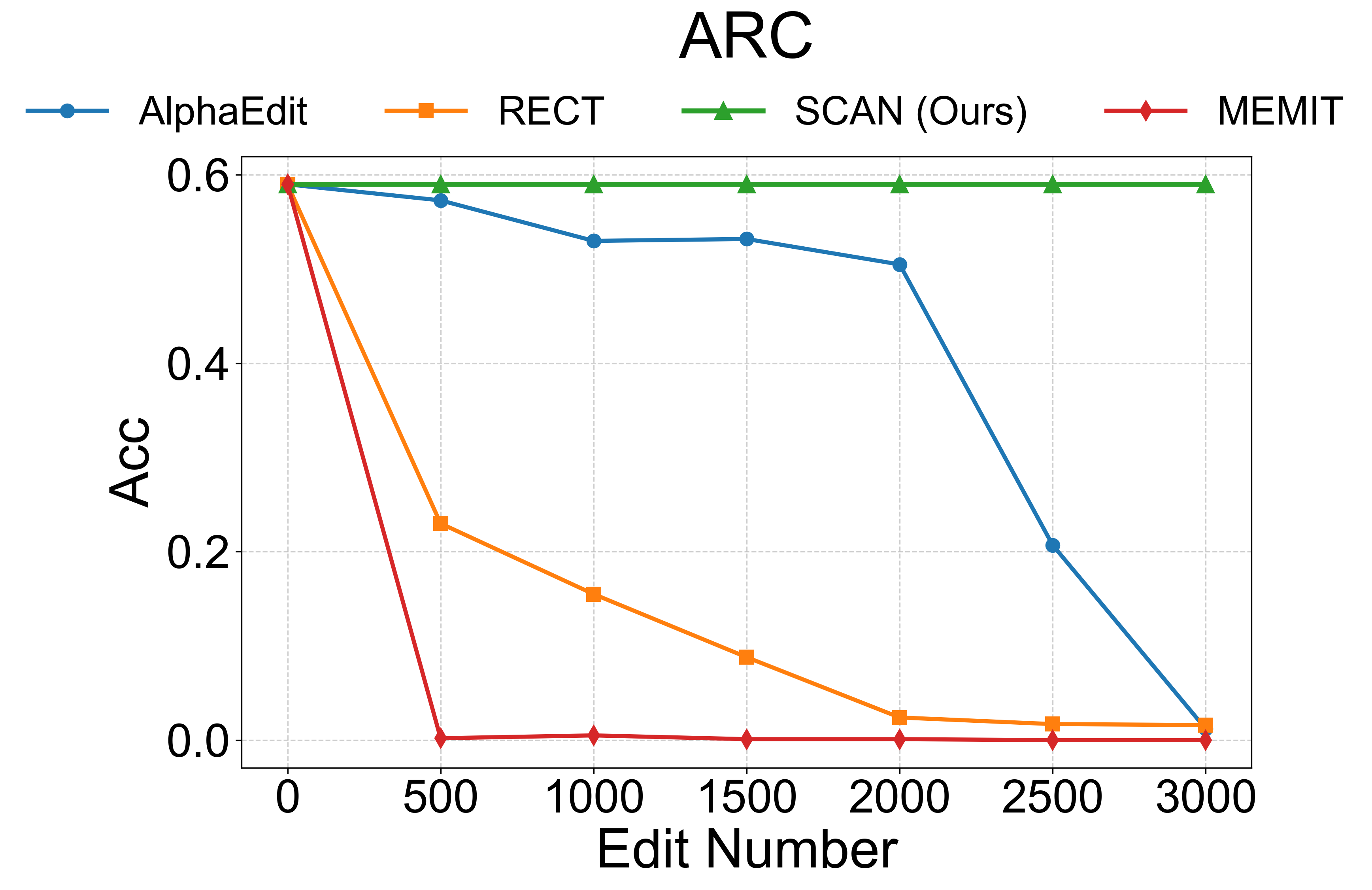}
        \label{fig:arc}
    }
\hspace{0.5em}
    \subfigure[]{
        \includegraphics[width=0.29\textwidth,
        trim=15pt 0pt 15pt 0pt,
        clip]{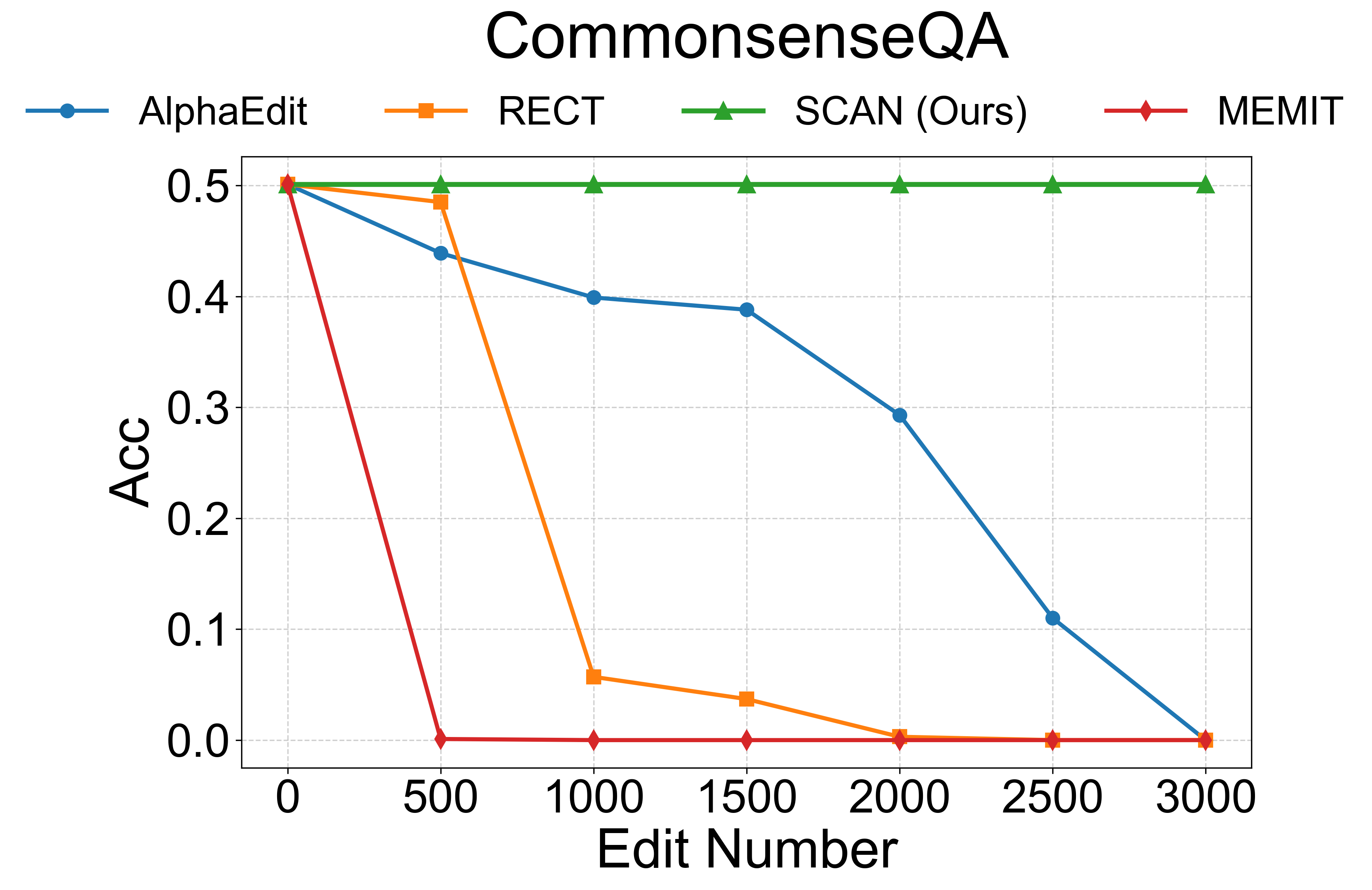}
        \label{fig:cqa}
    }
\hspace{0.5em}
    \subfigure[]{
        \includegraphics[width=0.29\textwidth,
        trim=15pt 0pt 15pt 0pt,
        clip]{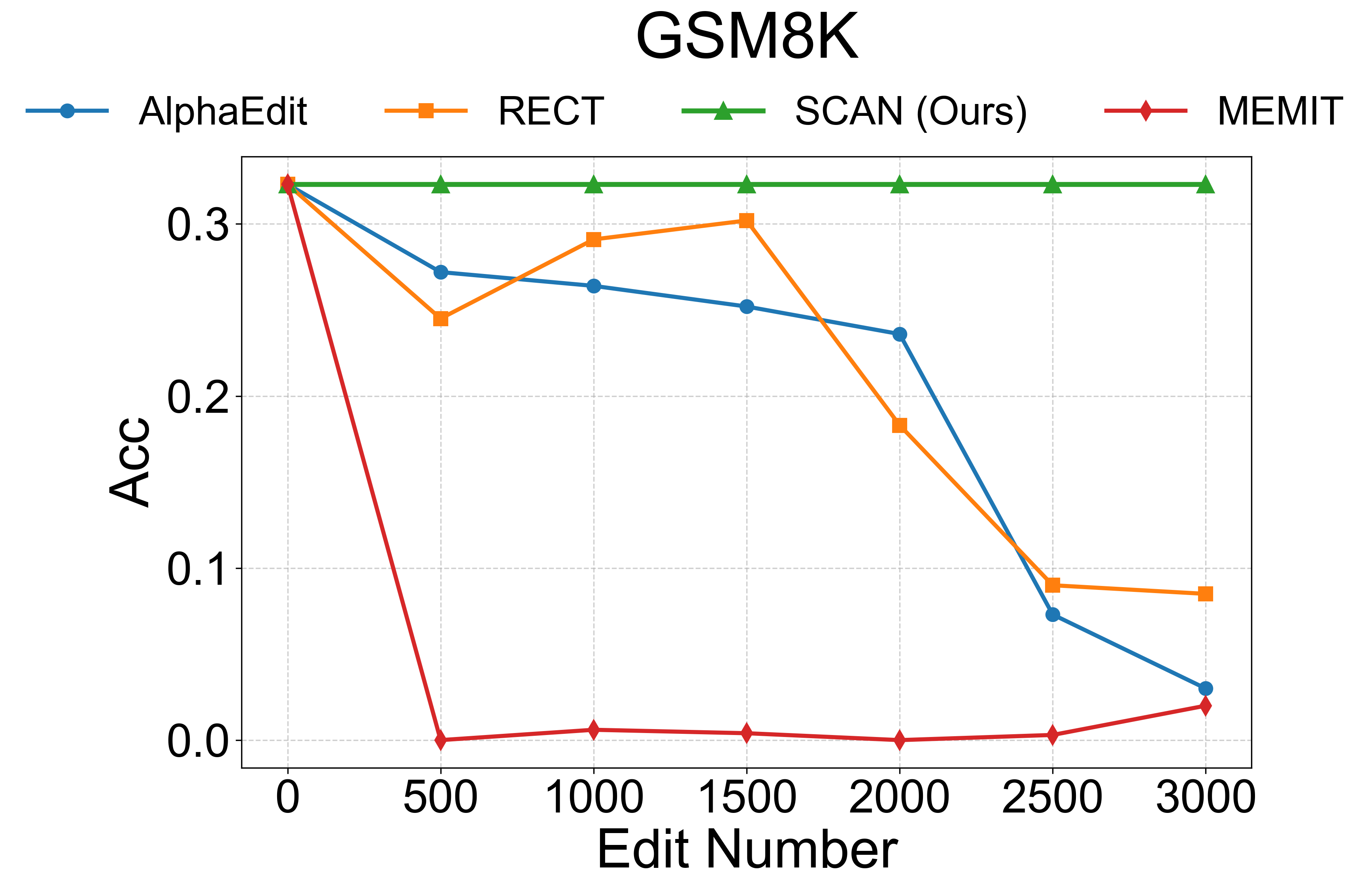}
        \label{fig:gsm}
    }

    \subfigure[]{
        \includegraphics[width=0.29\textwidth,
        trim=15pt 0pt 15pt 0pt,
        clip]{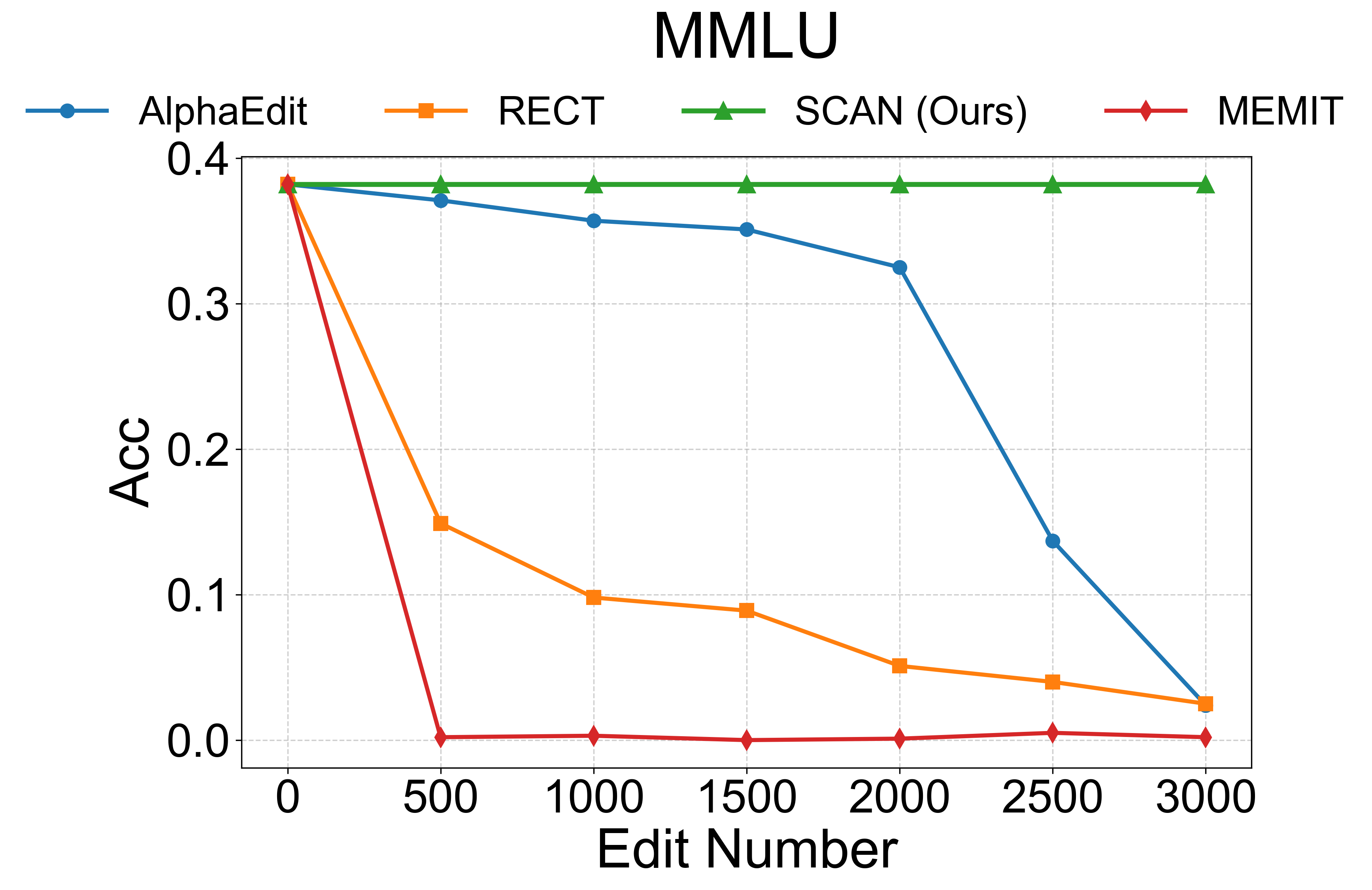}
        \label{fig:mmlu}
    }
\hspace{0.5em}
    \subfigure[]{
        \includegraphics[width=0.29\textwidth,
        trim=15pt 0pt 15pt 0pt,
        clip]{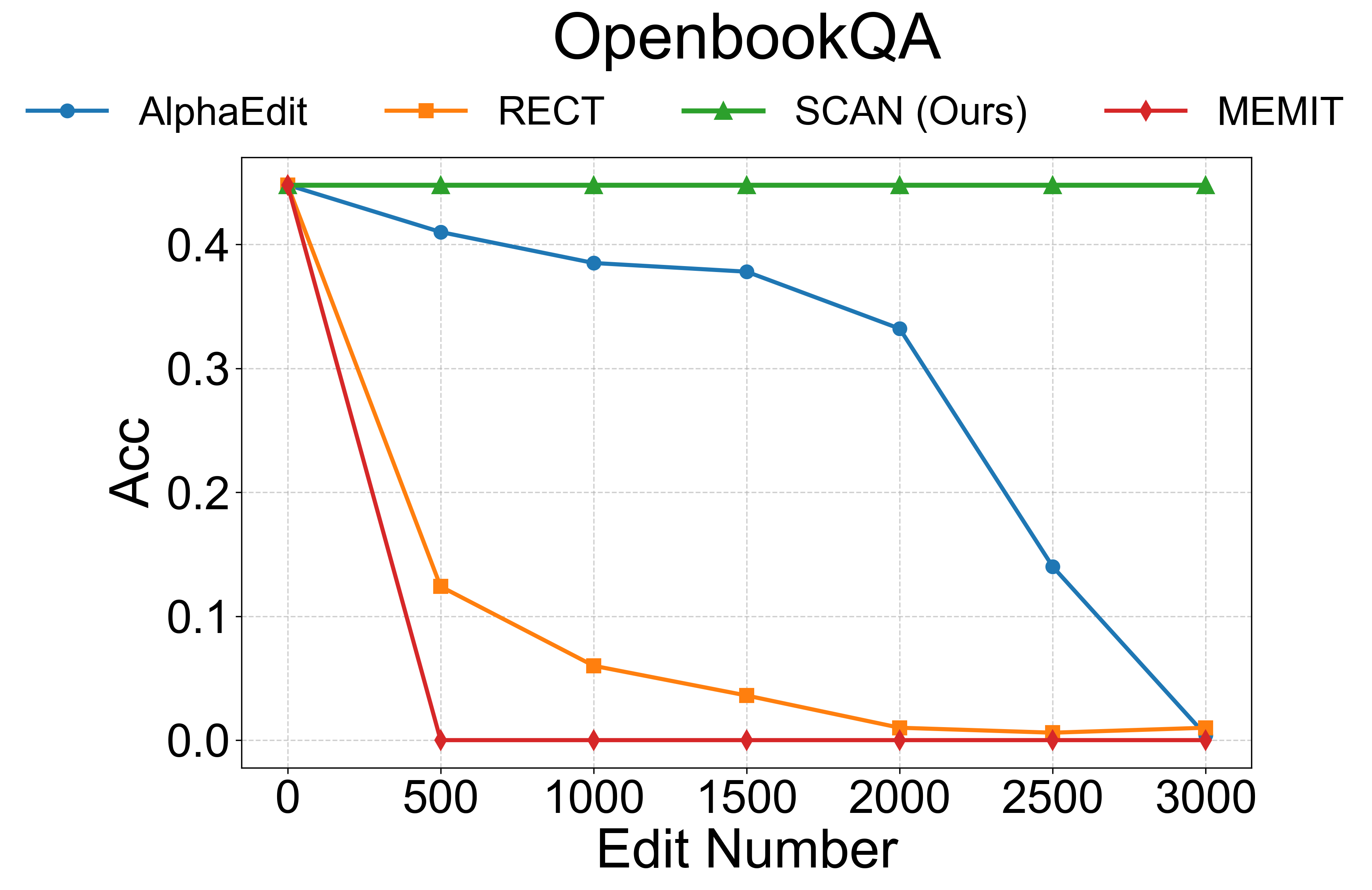}
        \label{fig:obqa}
    }
\hspace{0.5em}
    \subfigure[]{
        \includegraphics[width=0.29\textwidth,
        trim=15pt 0pt 15pt 0pt,
        clip]{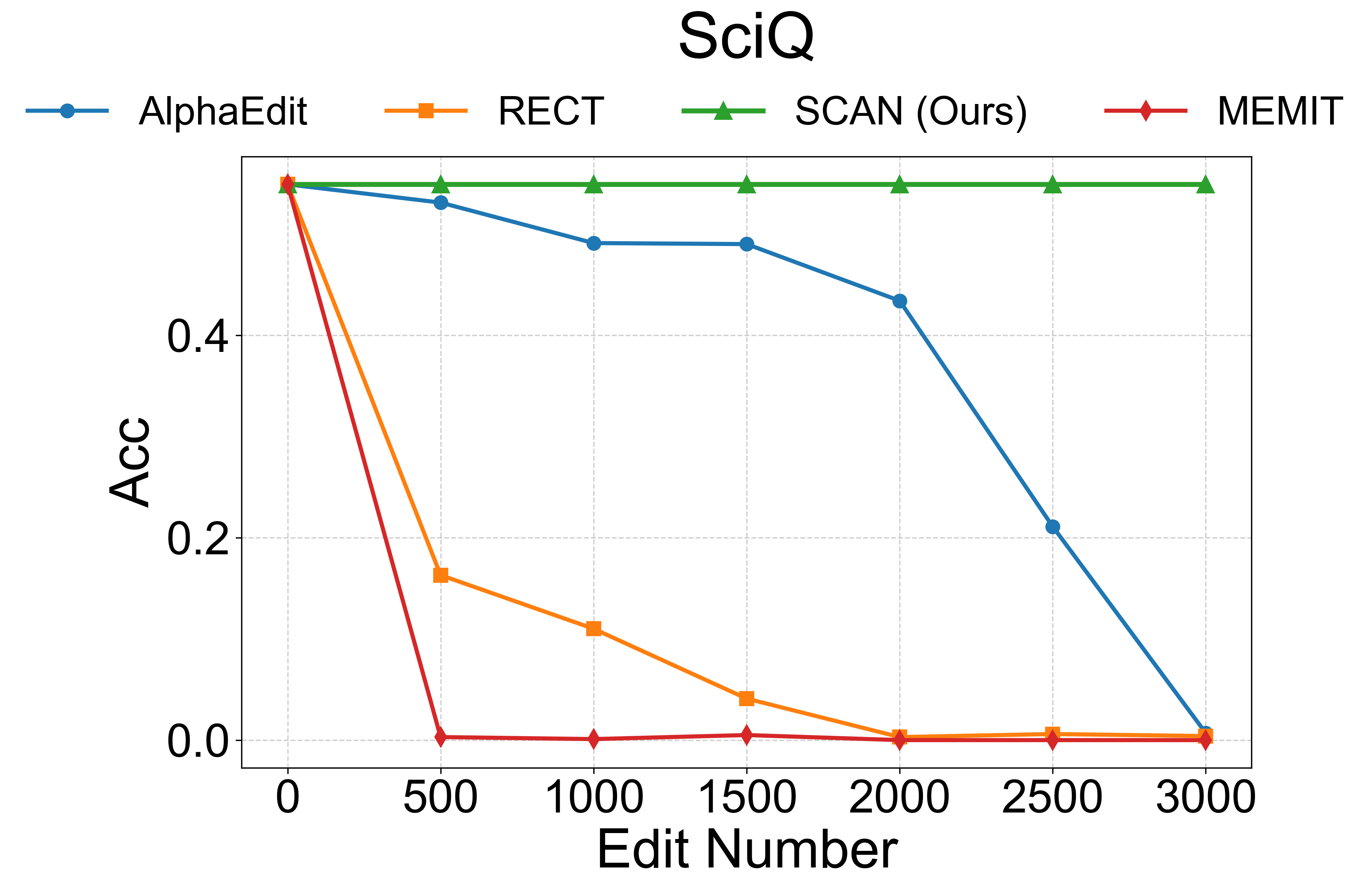}
        \label{fig:sciq}
    }

    \caption{Comparison of SCAN against other methods across six benchmarks. The results show the ability to preserve the model’s general performance as the number of edit cases grows.}
    \label{fig:downstream_benchmarks}
\end{figure*}

\section{Related Work}

\textbf{Model Editing for Knowledge Update.} Current model editing techniques are broadly categorized into parameter-modifying and parameter-preserving methods. Parameter Modifying Methods directly update model weights to encode new facts. Meta-learning approaches like MEND \cite{mitchell2022fast} use hypernetworks to transform gradients into specific parameter updates, while locate-then-edit methods such as ROME \cite{meng2022locating} and MEMIT \cite{meng2022mass} update parameters by solving constrained optimization problems, aiming to balance successful editing with the preservation of unrelated knowledge. AlphaEdit \cite{ICLR2025_29c8c615} employs null-space projection to ensure updates satisfy constraints without interfering with unrelated knowledge. Notably, RECT \cite{gu-etal-2024-model} introduces sparse edit but lacks a deep attribution-based analysis. In contrast, parameter-preserving Methods update model without altering the original weights. GRACE \cite{hartvigsen2023aging} achieves this by caching discrete codebook values for specific hidden states, while MELO \cite{yu2024melo} stores the lora weight. They both use Euclidean Distance to trigger the desired edit. WISE \cite{wang2024wise} utilizes pretrained router to compute activation as a trigger and a ``side memory'' MLP to generate output as perturbation. React \cite{zhong-etal-2025-react} use two pretrained MLPs to calculate semantic similarity and edit by computing belief shift based on positive and negative representations.

\textbf{Mechanistic Interpretability.} Our work is grounded in Mechanistic Interpretability. Early research focused on localization and logit analysis. Causal Tracing identifies important components by patching neuron to observe their causal effect \cite{meng2022locating}. Logit Lens monitors the model's intermediate computations by projecting hidden states onto the vocabulary via unembedding matrix \cite{geva2022transformer,dar2023analyzing}. These foundational techniques have evolved into more sophisticated Circuit Analysis, which seeks to decompose the model's computation into functional paths \cite{ameisen2025circuit}. To resolve the polysemanticity inherent in dense LLMs, Sparse Autoencoders (SAEs) \cite{cunningham2023sparse} and their variants, such as Sparse Transcoders \cite{dunefsky2024transcoders}, have been developed to project activations onto a sparse, monosemantic feature. Unlike standard SAEs, transcoders mimic the behavior of MLP layers, allowing for a more direct mapping of the knowledge storage mechanism \cite{paulo2025transcoders}.

\section{Conclusion}
In this paper, we introduced SCAN, a framework leveraging Sparse Transcoders and Attribution Graphs for sparse editing in a lifelong setting. By isolating minimal features, SCAN overcomes the granularity limitations, reducing side effects on unrelated knowledge. Extensive experiments demonstrate that SCAN achieves superior performance. Ultimately, SCAN contributes an interpretable perspective to the field of model editing.

\section*{Impact Statement}

This paper presents work whose goal is to advance the field of LLMs by improving the interpretability and controllability of LLMs representations, with a particular focus on lifelong knowledge editing. By enabling more transparent identification of model components and more targeted modifications of model behavior, this work may contribute to safer and more reliable LLMs systems. While techniques for model editing could, in principle, be misused to alter factual information in deployed models, our approach emphasizes interpretability and precise, controlled edits, which may help reduce unintended or harmful modifications. We do not anticipate significant negative societal consequences arising directly from this research.


\bibliography{main}
\bibliographystyle{icml2026}

\newpage
\appendix

\onecolumn
\section{Detailed experiment setup}
\subsection{Datasets}
Here, we provide a detailed introduction to the datasets used in this paper:
\begin{itemize}
    \item \textbf{CounterFact}: CounterFact is a more challenging dataset that contrasts counterfactual with
    factual statements, initially scoring lower for CounterFact. It constructs out-of-scope data by replacing the subject entity with approximate entities sharing the same predicate. The CounterFact dataset
    has similar metrics to ZsRE for evaluating efficacy, generalization, and specificity. Additionally,
    CounterFact includes multiple generation prompts with the same meaning as the original prompt to
    test the quality of generated text, specifically focusing on fluency and consistency.
    \item \textbf{ZsRE}: ZsRE is a question answering (QA) dataset that uses questions generated
    through back-translation as equivalent neighbors. Following previous work, natural questions
    are used as out-of-scope data to evaluate locality. Each sample in ZsRE includes a subject string
    and answers as the editing targets to assess editing success, along with the rephrased question for
    generalization evaluation and the locality question for evaluating specificity.
    \item \textbf{WikiFactDiff}: WikiFactDiff is a dataset focused on the task of factual updates, contrasting new, obsolete, and static facts across two different time points. It constructs out-of-scope data by comparing the state of the Wikidata knowledge base on January 4, 2021, and February 27, 2023. The dataset includes various update scenarios such as fact replacements, archiving, and new entity insertions, with facts represented as subject-relation-object triples. WikiFactDiff includes verbalization templates and cloze tests for evaluating update algorithms, specifically focusing on the quality and consistency of updates.
\end{itemize}
\subsection{Metrics}
\paragraph{Evaluation Formulation.}
All datasets in our experiments follow the same evaluation protocol.
Given an edit $e = (s, r, o, o^*)$, we evaluate the editing performance 
\textbf{Reliability}, \textbf{Generality}, \textbf{Locality}
and \textbf{General ability} for general dataset like MMLU.

\paragraph{Reliability.}
Reliability measures whether the model correctly applies the intended edit.
It evaluates the model's ability to produce the edited target $o^*$ when queried with the original subject--relation pair $(s, r)$:
\[
\mathcal{M}_{\text{rel}}
=
\mathbb{E}_{e \sim \mathcal{D}_{\text{edit}}}
\;
\mathbb{I}
\left[
\arg\max_{o}
P_{f^*}\!\left(o \mid p(s, r)\right)
=
o^*
\right]
\]

\paragraph{Generality.}
Generality evaluates whether the applied edit generalizes to in-scope variants of the edited fact. It measures the model's ability to output the same edited target $o^*$ under semantically equivalent prompts associated with the same edit:
\[
\mathcal{M}_{\text{gen}}
=
\mathbb{E}_{e \sim \mathcal{D}_{\text{edit}},{p^* \sim \mathcal{N}(e)}}
\;
\mathbb{I}
\left[
\arg\max_{o}
P_{f^*}\!\left(o \mid p^*(s, r)\right)
=
o^*
\right]
\]
where $\mathcal{N}(e)$ denotes the set of in-scope prompt variations for edit $e$.

\paragraph{Locality.}
Locality measures whether the edit avoids unintended side effects on unrelated knowledge.
It evaluates whether the model's predictions on out-of-scope inputs remain unchanged after editing:
\[
\mathcal{M}_{\text{loc}}
=
\mathbb{E}_{(x,p) \sim \mathcal{D}_{\text{loc}}}
\;
\mathbb{I}
\left[
\arg\max_{x}
P_{f^*}(x \mid p)
=
\arg\max_{x}
P_{f}(x \mid p)
\right]
\]
where $f$ and $f^*$ denote the original and edited models, respectively.

\paragraph{General Ability.}
General ability evaluates whether the editing process degrades the model's overall reasoning and knowledge capabilities across diverse domains.
\[
\mathcal{M}_{\text{ga}}
=
\mathbb{E}_{(x, y) \sim \mathcal{D}_{\text{ga}}}
\;
\mathbb{I}
\left[
\arg\max_{y}
P_{f^*}(y \mid x)
=
y
\right]
\]
where $f^*$ denotes the edited model.

\subsection{Baselines}
Six popular model editing methods were selected as baselines including:
\begin{itemize}
    \item \textbf{FT}: FT simply performed gradient descent on the edits to update model parameters. It fine-tuned the last layer in the model with a norm constraint on weight changes to prevent overfitting.
    \item \textbf{RECT}: RECT is a method that regularizes weight updates based on their relative changes to control parameter perturbations, thereby achieving knowledge edits while maximizing the preservation of the model's general capabilities.
    \item \textbf{GRACE}: GRACE enables localized corrections of streaming errors in deployed models by writing new mappings into the pretrained model's latent space, creating a discrete, local edit cache, thereby achieving continuous knowledge updates while minimizing the impact on unrelated inputs, all without modifying the model weights.
    \item \textbf{MELO}:MELO is a plug‑in model editing method based on neuron‑indexed dynamic LoRA, which alters the behavior of language models by dynamically activating certain LoRA blocks according to an internal vector database
    \item \textbf{MEMIT}: MEMIT first localizes the key positions storing factual knowledge in the Transformer MLP modules, and then simultaneously updates a large set of facts across multiple MLP layers, achieving order-independent knowledge edits with minimal impact on other knowledge.
    \item \textbf{AlphaEdit}: AlphaEdit builds on MEMIT by introducing null-space projection, constraining knowledge updates in directions that do not disrupt existing knowledge, thereby achieving more robust, order-independent multi-fact edits.
\end{itemize}
\subsection{Implementation details}
We report the hyperparameter configurations used for all editing methods and backbone models. 
For each method, we select the best-performing configuration (recommandation from \cite{wang-etal-2024-easyedit}) and apply it consistently across all experiments.

\centering
\noindent\textbf{AlphaEdit}
\smallskip

\centering
\footnotesize
\setlength{\tabcolsep}{8pt}
\begin{tabular}{l|c|c|c}
\hline
\textbf{Parameter} & \textbf{Qwen3-8B} & \textbf{Llama3.1-8B} & \textbf{Gemma2-2B} \\
\hline
Layers & 4--8 & 4--8 & 4--8 \\
Fact token & subj\_last & subj\_last & subj\_last \\
$w_{\text{decay}}$ & $10^{-3}$ & $10^{-3}$ & $10^{-3}$ \\
Mom. weight & 15000 & 15000 & 15000 \\
Mom. samples & 3000 & 3000 & 3000 \\
Clamp factor & 0.75 & 0.75 & 0.75 \\
Null thresh. & 0.02 & 0.02 & 0.02 \\
$L_2$ & 100 & 10 & 500 \\
Steps & 25 & 25 & 25 \\
LR & $10^{-3}$ & $10^{-3}$ & $10^{-3}$ \\
\hline
\end{tabular}

\bigskip
\noindent\textbf{Fine-Tuning (FT)}
\smallskip

\begin{tabular}{l|c|c|c}
\hline
\textbf{Parameter} & \textbf{Qwen3-8B} & \textbf{Llama3.1-8B} & \textbf{Gemma2-2B} \\
\hline
Layers & 35 & 31 & 25 \\
Steps & 25 & 25 & 25 \\
LR & $5\times10^{-4}$ & $5\times10^{-4}$ & $5\times10^{-4}$ \\
\hline
\end{tabular}

\bigskip
\noindent\textbf{MEMIT}
\smallskip

\begin{tabular}{l|c|c|c}
\hline
\textbf{Parameter} & \textbf{Qwen3-8B} & \textbf{Llama3.1-8B} & \textbf{Gemma2-2B} \\
\hline
Layers & 4--8 & 4--8 & 4--8 \\
Fact token & subj\_last & subj\_last & subj\_last \\
$w_{\text{decay}}$ & $10^{-3}$ & $10^{-3}$ & $10^{-3}$ \\
Mom. weight & 15000 & 15000 & 15000 \\
Mom. samples & 3000 & 3000 & 3000 \\
Clamp factor & 4 & 4 & 4 \\
Steps & 25 & 25 & 25 \\
LR & 0.5 & 0.5 & 0.5 \\
\hline
\end{tabular}

\bigskip
\noindent\textbf{RECT}
\smallskip

\begin{tabular}{l|c|c|c}
\hline
\textbf{Parameter} & \textbf{Qwen3-8B} & \textbf{Llama3.1-8B} & \textbf{Gemma2-2B} \\
\hline
Layers & 4 & 4 & 4 \\
Sparse rate & 0.2 & 0.2 & 0.2 \\
Reg. & abs & abs & abs \\
Steps & 25 & 25 & 25 \\
LR & 0.5 & 0.5 & 0.5 \\
\hline
\end{tabular}

\bigskip
\noindent\textbf{GRACE}
\smallskip

\begin{tabular}{l|c|c|c}
\hline
\textbf{Parameter} & \textbf{Qwen3-8B} & \textbf{Llama3.1-8B} & \textbf{Gemma2-2B} \\
\hline
Layers & 23 & 27 & 19 \\
$\epsilon$ & 1 & 1 & 1 \\
Steps & 100 & 100 & 100 \\
LR & 1 & 1 & 1 \\
\hline
\end{tabular}

\bigskip
\noindent\textbf{MELO}
\smallskip

\begin{tabular}{l|c|c|c}
\hline
\textbf{Parameter} & \textbf{Qwen3-8B} & \textbf{Llama3.1-8B} & \textbf{Gemma2-2B} \\
\hline
Layers & 35,35 & 30,31 & 24,25 \\
Steps & 50 & 50 & 50 \\
LR & $10^{-4}$ & $10^{-4}$ & $10^{-4}$ \\
LoRA $(r,\alpha)$ & (64,64) & (64,64) & (64,64) \\
Radius & 75 & 75 & 75 \\
\hline
\end{tabular}

\bigskip
\noindent\textbf{SCAN (Ours)}
\smallskip

\begin{tabular}{l|c|c|c}
\hline
\textbf{Parameter} & \textbf{Qwen3-8B} & \textbf{Llama3.1-8B} & \textbf{Gemma2-2B} \\
\hline
Layers & all & all & all \\
Steps & 50 & 50 & 50 \\
LR & 0.005 & 0.005 & 0.005 \\
Thresh. & 0.25 & 0.25 & 0.25 \\
Features Num.& 300 & 450 & 200 \\
Node Thresh. & 0.8 & 0.9 & 0.8 \\
Edge Thresh. & 0.98 & 0.99 & 0.98 \\
Node Num. & 8192 & 8192 & 8192 \\
Transcoder Dim. & 163840 & 131072 & 16384 \\
\hline
\end{tabular}

\section{Detailed Proof}
\subsection{Jacobian as the Optimal Direction-Preserving Linearization}
\begin{lemma}[Stability of normalization]
\label{lem:normalization}
Let $u,v\in\mathbb{R}^n$ be nonzero vectors. 
Then
\[
\left\|
\frac{u}{\|u\|}
-
\frac{v}{\|v\|}
\right\|
\le
\frac{2\|u-v\|}{\|v\|}
\]
\end{lemma}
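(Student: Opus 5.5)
The plan is to insert and subtract the intermediate vector $u/\|v\|$ and apply the triangle inequality:
\[
\left\|\frac{u}{\|u\|}-\frac{v}{\|v\|}\right\|
\le
\left\|\frac{u}{\|u\|}-\frac{u}{\|v\|}\right\|
+
\left\|\frac{u}{\|v\|}-\frac{v}{\|v\|}\right\|.
\]
The second term equals $\|u-v\|/\|v\|$ directly, so everything reduces to showing that the first term is also at most $\|u-v\|/\|v\|$.

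For the first term, I factor out $u$. This gives $\|u\|\cdot\bigl|\tfrac{1}{\|u\|}-\tfrac{1}{\|v\|}\bigr| = \tfrac{\bigl|\|v\|-\|u\|\bigr|}{\|v\|}$. By the reverse triangle inequality, $\bigl|\|v\|-\|u\|\bigr|\le\|u-v\|$. Hence the first term is at most $\|u-v\|/\|v\|$, and adding the two bounds gives the constant $2$.

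The only point requiring care is the choice of intermediate vector. Normalizing $u$ by $\|v\|$, rather than normalizing $v$ by $\|u\|$, is what makes the denominator on the right-hand side come out as $\|v\|$. That denominator is the form needed later, when $v=J_f(x_0)x_0$ and non-singularity of the Jacobian bounds $\|v\|$ below by a multiple of $\|x_0\|$. Nonzero-ness of $u$ and $v$ is used only to make the normalizations well defined, so I expect no real obstacle.
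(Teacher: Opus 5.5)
Your proof is correct. It uses the same ingredients as the paper's proof: add and subtract an intermediate vector, then apply the triangle inequality and the reverse triangle inequality. The difference is that you insert the mirror-image intermediate vector.

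The paper writes $\frac{u}{\|u\|}-\frac{v}{\|v\|} = \frac{u-v}{\|u\|} + v\left(\frac{1}{\|u\|}-\frac{1}{\|v\|}\right)$, which amounts to inserting $v/\|u\|$. Its chain of inequalities therefore ends with $2\|u-v\|/\|u\|$, not the stated $2\|u-v\|/\|v\|$. To recover the lemma as stated, one must add that the hypotheses and the left-hand side are symmetric in $u$ and $v$, so the roles can be swapped. The stated form is also the one the Proposition uses, where the denominator must be $\|J_f(x_0)x_0\|$.

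Your choice of $u/\|v\|$ gives the required denominator directly, so your argument proves exactly the stated inequality with no appeal to symmetry. Your closing remark about why $\|v\|$ is the denominator needed downstream identifies exactly where the paper's written proof is slightly off.
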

\begin{proof}
We decompose
\[
\frac{u}{\|u\|}-\frac{v}{\|v\|}
=
\frac{u-v}{\|u\|}
+
v\!\left(\frac{1}{\|u\|}-\frac{1}{\|v\|}\right)
\]
Taking norms and applying the triangle inequality yields
\[
\left\|
\frac{u}{\|u\|}-\frac{v}{\|v\|}
\right\|
\le
\frac{\|u-v\|}{\|u\|}
+
\|v\|\left|\frac{1}{\|u\|}-\frac{1}{\|v\|}\right|
\]

Using the reverse triangle inequality,
\(
|\|u\|-\|v\||\le \|u-v\|
\)
we obtain
\[
\|v\|\left|\frac{1}{\|u\|}-\frac{1}{\|v\|}\right|
=
\frac{|\|v\|-\|u\||}{\|u\|}
\le
\frac{\|u-v\|}{\|u\|}
\]
Hence,
\[
\left\|
\frac{u}{\|u\|}-\frac{v}{\|v\|}
\right\|
\le
\frac{2\|u-v\|}{\|u\|}
\]

\end{proof}
\begin{proof}
Fix $x_0 \in X$ and assume that $f$ is differentiable at $x_0$ with
Jacobian matrix $J_f(x_0)$. By differentiability at $x_0$, we have
\[
f(x_0+h)
=
f(x_0) + J_f(x_0)h + r_{x_0}(h),
\qquad
\frac{\|r_{x_0}(h)\|}{\|h\|}\xrightarrow[h\to 0]{}0
\]

Setting $h=-x_0$ yields
\[
f(0)
=
f(x_0) - J_f(x_0)x_0 + r_{x_0}(-x_0)
\]
Since $f(0)=0$, this can be rewritten as
\[
f(x_0)
=
J_f(x_0)x_0 - r_{x_0}(-x_0)
\]

Define $\tilde r(x_0):=-r_{x_0}(-x_0)$. Then we have
\[
f(x_0) = J_f(x_0)x_0 + \tilde r(x_0)
\qquad \text{and} \qquad
\frac{\|\tilde r(x_0)\|}{\|x_0\|}
=
\frac{\|r_{x_0}(-x_0)\|}{\|x_0\|}
\xrightarrow[x_0\to 0]{}0
\]

Then we consider the difference in normalized directions:
\[
\left\|
\frac{f(x_0)}{\|f(x_0)\|}
-
\frac{J_f(x_0)x_0}{\|J_f(x_0)x_0\|}
\right\|
=
\left\|
\frac{J_f(x_0)x_0+\tilde r(x_0)}{\|J_f(x_0)x_0+\tilde r(x_0)\|}
-
\frac{J_f(x_0)x_0}{\|J_f(x_0)x_0\|}
\right\|
\]

By the lemma ~\ref{lem:normalization}, we have
\[
\left\|
\frac{f(x_0)}{\|f(x_0)\|}
-
\frac{J_f(x_0)x_0}{\|J_f(x_0)x_0\|}
\right\|
\le
\frac{2\|\tilde r(x_0)\|}{\|J_f(x_0)x_0\|}
\]

Since $J_f(0)$ is non-singular, there exists a constant $M > 0$ such that $\|J_f(0)x_0\| \ge M\|x_0\|$. By the continuity of the Jacobian at the origin, we have $J_f(x_0) \to J_f(0)$ as $x_0 \to 0$. For sufficiently small $x_0$, the reverse triangle inequality implies:
\[
\|J_f(x_0)x_0\| \ge \|J_f(0)x_0\| - \|(J_f(x_0) - J_f(0))x_0\| \ge \frac{M}{2}\|x_0\|
\]
Setting $m = M/2$, it follows that $\|J_f(x_0)x_0\| \ge m\|x_0\|$ for $x_0$ near the origin. This implies that the ratio satisfies
\[
\frac{2\|\tilde r(x_0)\|}{\|J_f(x_0)x_0\|}
\le
\frac{2\|\tilde r(x_0)\|}{m\|x_0\|}
=
\frac{2}{m} \cdot \frac{\|\tilde r(x_0)\|}{\|x_0\|}
\xrightarrow[x_0\to 0]{}0
\]

Hence, we obtain
\[
\left\|
\frac{f(x_0)}{\|f(x_0)\|}
-
\frac{J_f(x_0)x_0}{\|J_f(x_0)x_0\|}
\right\|
\xrightarrow[x_0\to 0]{}0
\]
which proves that the Jacobian matrix $J_f(x_0)$ satisfies the desired
directional alignment property.
\end{proof}

\subsection{Closed-form Total Attribution Matrix}
\begin{lemma}[Convergence of Powers of $A$]
Let $A \in \mathbb{R}^{n \times n}$ satisfy $\|A\| < 1$. Then
\[
A^k \xrightarrow[k \to \infty]{} 0
\]
where $0$ is the $n \times n$ zero matrix.
\end{lemma}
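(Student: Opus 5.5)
We take $\|\cdot\|$ to be a submultiplicative matrix norm, for instance the operator norm induced by a vector norm on $\mathbb{R}^n$. This is the natural reading of the hypothesis $\|A\|<1$ in the statement and in the preceding Closed-form Total Attribution Matrix proposition.

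The argument has three steps. First, we show by induction on $k$ that
\[
\|A^k\| \le \|A\|^k .
\]
The case $k=1$ is trivial. For the inductive step, submultiplicativity gives
\[
\|A^{k+1}\| = \|A^k A\| \le \|A^k\|\,\|A\| \le \|A\|^{k+1}.
\]

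Second, we set $q=\|A\|\in[0,1)$. Then $q^k\to 0$, so $\|A^k\|\le q^k$ forces $\|A^k\|\to 0$.

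Third, we pass from convergence in this norm to entrywise convergence to the zero matrix. On the finite-dimensional space $\mathbb{R}^{n\times n}$ all norms are equivalent. Concretely, $|(A^k)_{ij}|\le C\|A^k\|$ for a constant $C$ depending only on $n$ and the chosen norm. For the operator norm induced by $\|\cdot\|_2$ one may take $C=1$, since $|(A^k)_{ij}| = |e_i^\top A^k e_j| \le \|A^k\|_2$. Hence every entry of $A^k$ tends to $0$, which is the claimed convergence.

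The only real obstacle is that the statement does not say which norm is meant. If $\|\cdot\|$ were an arbitrary norm, not necessarily submultiplicative, the first step could fail. If that reading were intended, I would first try to show that the spectral radius satisfies $\rho(A)<1$ and then conclude via Gelfand's formula. However, a non-submultiplicative norm does not control $\rho(A)$, so $\|A\|<1$ would not suffice without submultiplicativity. I would therefore state explicitly that $\|\cdot\|$ is an operator (hence submultiplicative) norm.

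The lemma is then used in the proof of the closed form $B=(I-A)^{-1}-I$ as follows. The partial sums satisfy
\[
(I-A)\sum_{k=0}^{N}A^k = I - A^{N+1}.
\]
By the lemma, $A^{N+1}\to 0$. Together with absolute convergence of $\sum_k \|A\|^k$, this shows that $I-A$ is invertible and that $\sum_{k\ge1}A^k = (I-A)^{-1}-I$.
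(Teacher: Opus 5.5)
Your proof is correct and is essentially the paper's argument: iterate submultiplicativity to get $\|A^k\| \le \|A\|^k \to 0$. You are more careful than the paper in two places it leaves implicit: you require $\|\cdot\|$ to be submultiplicative (correctly noting the lemma fails otherwise), and you spell out the step from norm convergence to entrywise convergence.
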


\begin{proof}
For any positive integer $k$, we have
\[
\|A^k\| = \|A \cdot A^{k-1}\| \le \|A\| \cdot \|A^{k-1}\| \le \cdots \le \|A\|^k
\]
Since $\|A\| < 1$. Hence,
\[
\|A^k\| \xrightarrow[k \to \infty]{} 0
\]
which implies
\[
A^k \xrightarrow[k \to \infty]{} 0
\]
\end{proof}

\begin{proof}[Proof of Invertibility]
To prove that $I-A$ is invertible, it suffices to show that $0$ is not an eigenvalue of $I-A$.
Equivalently, we only need to show that $1$ is not an eigenvalue of $A$.

We argue by contradiction. 
Suppose that $1$ is an eigenvalue of $A$.
Then there exists a nonzero vector $x \in \mathbb{R}^n$ such that
\[
Ax = x
\]
Without loss of generality, we may assume $\|x\| = 1$.

By iteration, for any positive integer $k$ we have
\[
A^k x = x
\]
Taking norms yields
\[
1 = \|x\| = \|A^k x\| \le \|A^k\|\,\|x\| = \|A^k\|
\]

On the other hand, if $\|A\| < 1$, then
\[
\|A^k\| \le \|A\|^k \xrightarrow[k \to \infty]{} 0
\]
This contradicts the inequality $1 \le \|A^k\|$.

Therefore, $1$ cannot be an eigenvalue of $A$, and hence $0$ is not an eigenvalue of $I-A$.
It follows that $I-A$ is invertible.
\end{proof}

\begin{proof}

For any positive integer $k$, a direct computation shows that
\[
(I + A + A^2 + \cdots + A^k)(I - A) = I - A^{k+1}
\]
Consequently,
\[
I + A + A^2 + \cdots + A^k
= (I - A)^{-1} - A^{k+1}(I - A)^{-1}
\]

Since
\[
A^{k+1} \xrightarrow[k \to \infty]{} 0
\]
it follows that
\[
A^{k+1}(I - A)^{-1} \xrightarrow[k \to \infty]{} 0
\]
Letting $k \to \infty$, we conclude that the matrix series
\[
\sum_{k=0}^{\infty} A^k
\]
converges and satisfies
\[
\sum_{k=0}^{\infty} A^k = (I - A)^{-1}
\]
\end{proof}

\section{Illustration of attribution graph}
\subsection{Illustrate algorithm of SCAN}
\raggedright
The complete procedure for constructing Attribution Graph is summarized in Algorithm \ref{alg:graph_pruning}. Owing to the nilpotent structure of the adjacency matrix A, which arises when the nodes are arranged in order of increasing layer, (i.e., from lower to higher levels, such that $a_{i,j}=0$ for all $i\geq j$) and the fact that the maximum path length in multi-step attributions is finite, we adopt an iterative algorithm to compute total attribution matrix $B$. Moreover the sparsity of $A$ ensures computational efficiency.

\begin{algorithm}[H]
\caption{Knowledge-Specific Circuit Construction and Pruning}
\label{alg:graph_pruning}
\begin{algorithmic}

\STATE {\bfseries Input:} editing instance $e=(s,r,o \rightarrow o^*)$, model $M$, prompt $p$, threshold $\tau$, propagation steps $N$

\STATE {\bfseries Forward Pass and Node Collection}
\STATE Run a forward pass of $M$ with prompt $p$
\STATE Record embeddings, Sparse Transcoder activations, MLP reconstruction errors, and output logits
\STATE Construct node set
\[
V = V_{\text{embed}} \cup V_{\text{feature}} \cup V_{\text{error}} \cup V_{\text{logit}}
\]
\STATE Initialize complete Attribution Araph $G = K(V)$

\STATE {\bfseries Direct Attribution Matrix Construction}
\FORALL{node pairs $(u,v)$ with $u$ preceding $v$}
    \STATE Obtain activation $z_u$
    \STATE Backpropagate gradient from node $v$ and compute $\frac{\partial{M_v}}{\partial{z_u}} $ 
    \STATE Set $A_{v,u} \leftarrow \frac{\partial{M_v}}{\partial{z_u}} \cdot z_u$
\ENDFOR

\STATE {\bfseries Iterative Indirect Attribution Accumulation}
\STATE Initialize $A_1 \leftarrow A$
\FOR{$k = 2$ {\bfseries to} $N$}
    \STATE $A_k \leftarrow A \cdot A_{k-1} + A$
\ENDFOR
\STATE Set $B \leftarrow A_N$

\STATE {\bfseries Recursive Edge Pruning}
\FORALL{target nodes $v \in V$}
    \STATE Collect incoming edges $\mathcal{E}_v = \{u \to v\}$
    \STATE Sort $\mathcal{E}_v$ in descending order of $B_{v,u}$
    \STATE Normalize scores $\widetilde{B}_{uv}$
    \STATE Initialize cumulative sum $c \leftarrow 0$
    \FORALL{edges $u \to v$ in sorted order}
        \STATE $c \leftarrow c + \widetilde{B}_{v,u}$
        \IF{$c \ge \tau$}
            \STATE Prune this edge and all remaining edges in $\mathcal{E}_v$
            \STATE {\bfseries break}
        \ENDIF
    \ENDFOR
\ENDFOR

\STATE {\bfseries Output}

\STATE Return the remaining graph $G'$

\end{algorithmic}
\end{algorithm}

\subsection{Example of Attribution Graph}

To further elucidate the model's decision-making process, we provide a high-fidelity Attribution Graph as an example in Figure \ref{fig:feature_flow}.

\begin{figure}[htbp]
    \centering
    \includegraphics[width=1.0\linewidth, 
                     height=0.4\textheight, 
                     keepaspectratio, 
                     trim=1pt 1pt 1pt 1pt, 
                     clip]               
                     {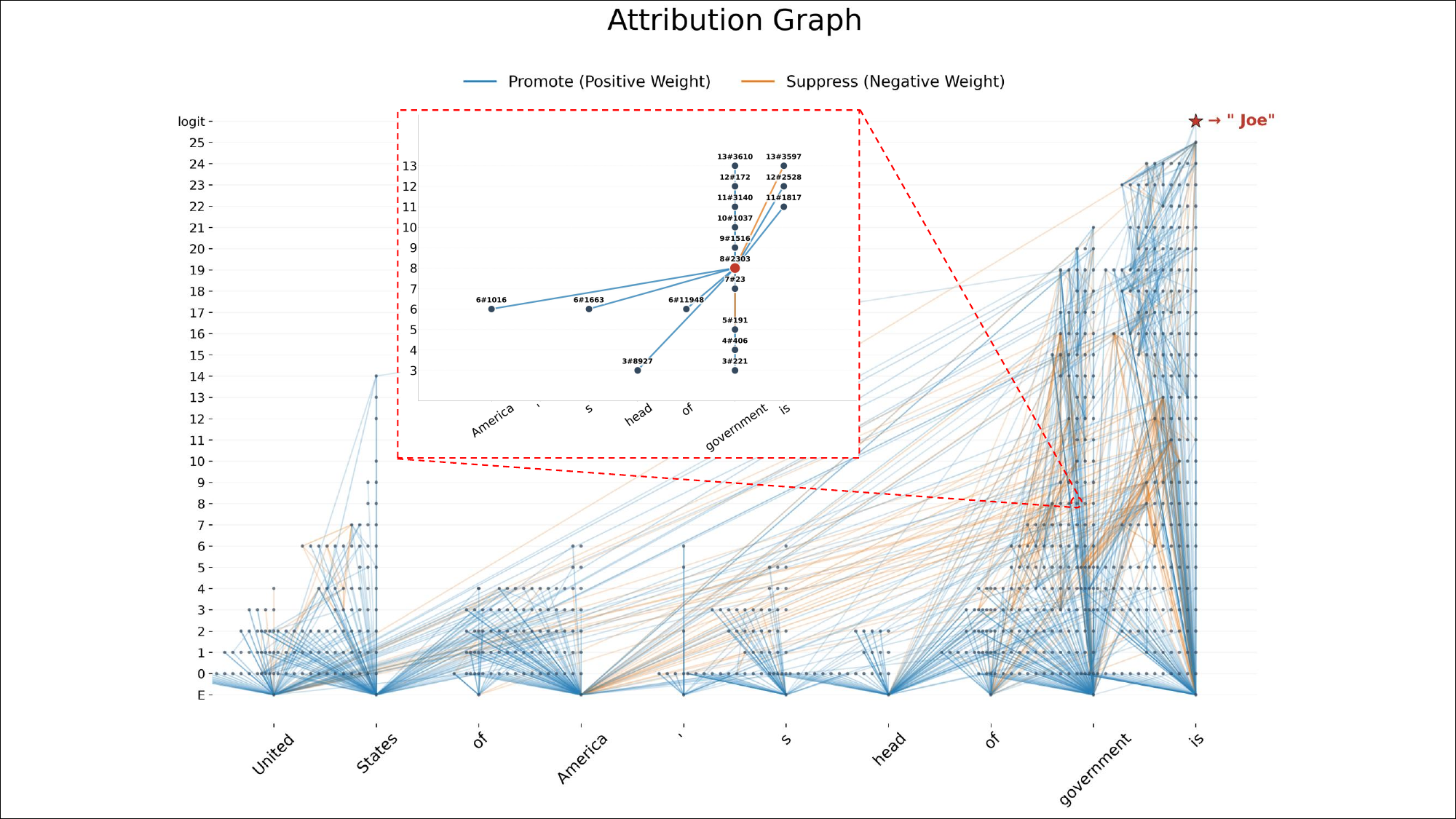}
    \caption{Attribution Graph for the target token ``Joe'' on Gemma2-2B. Nodes represent features across layers. Blue edges denote promoting effects (positive weights), while orange edges denote suppressing effects (negative weights). The color opacity represents the relative magnitude of the attribution weight, with uniform line thickness for visual clarity.}
    \label{fig:feature_flow}
\end{figure}

\section{Other experiment} 
\subsection{Result on Llama3.2-1B}
\raggedright
To evaluate the robustness of editing methods on lightweight architectures, we further report results on the small-scale model Llama3.2-1B as shown in Table \ref{tab:llama3.2-1b}.
\begin{table}[H]
\centering
\caption{Sequential editing task performance comparison of our method and other methods after 1000 edits. \textbf{Bold} and \underline{underline} denote the best and second-best results per column, respectively.}
\label{tab:llama3.2-1b}

\renewcommand{\arraystretch}{1.0}
\setlength{\tabcolsep}{10pt}
\newcommand{\lighttext}[1]{{\color{black!70}#1}}

\resizebox{\textwidth}{!}{%
\Large 
\begin{tabular}{cc cccc | cccc | cccc} 

\toprule[1.5pt]
\multirow{2}{*}{\textbf{Method}} & \multirow{2}{*}{\textbf{Model}} &
\multicolumn{4}{c}{\textbf{CounterFact}} &
\multicolumn{4}{c}{\textbf{ZsRE}} &
\multicolumn{4}{c}{\textbf{WikiFactDiff}} \\
\cmidrule(lr){3-6} \cmidrule(lr){7-10} \cmidrule(lr){11-14}
& & \textbf{Rel} $\uparrow$ & \textbf{Gen} $\uparrow$ & \textbf{Loc} $\uparrow$ & \textbf{Avg} $\uparrow$
& \textbf{Rel} $\uparrow$ & \textbf{Gen} $\uparrow$ & \textbf{Loc} $\uparrow$ & \textbf{Avg} $\uparrow$
& \textbf{Rel} $\uparrow$ & \textbf{Gen} $\uparrow$ & \textbf{Loc} $\uparrow$ & \textbf{Avg} $\uparrow$ \\
\midrule[1pt]

FT & \multirow{7}{*}{\rotatebox{90}{Llama3.2-1B}} & \lighttext{32.25} & \lighttext{11.05} & \lighttext{1.90} & \lighttext{5.23}
& \lighttext{55.97} & \lighttext{48.16} & \lighttext{6.01} & \lighttext{14.86}
& \lighttext{68.17} & \lighttext{64.81} & \lighttext{16.95} & \lighttext{30.98} \\
RECT & & \lighttext{2.30} & \lighttext{4.10} & \lighttext{0} & \lighttext{0}
& \lighttext{1.50} & \lighttext{1.33} & \lighttext{1.56} & \lighttext{1.46}
& \lighttext{0.44} & \lighttext{0.37} & \lighttext{0} & \lighttext{0} \\
AlphaEdit & & \lighttext{86.30} & \lighttext{42.95} & \lighttext{23.70} & \lighttext{39.61}
& \lighttext{80.66} & \lighttext{66.36} & \lighttext{45.70} & \lighttext{61.21}
& \lighttext{48.06} & \lighttext{41.45} & \lighttext{24.60} & \lighttext{35.15} \\
MEMIT & & \lighttext{0} & \lighttext{0} & \lighttext{0.30} & \lighttext{0}
& \lighttext{0.23} & \lighttext{0.20} & \lighttext{4.50} & \lighttext{0.55}
& \lighttext{0.15} & \lighttext{0.15} & \lighttext{0.76} & \lighttext{0.24} \\
GRACE & & \textbf{100} & \lighttext{0.80} & \textbf{99.80} & \lighttext{2.37}
& \lighttext{\underline{99.40}} & \lighttext{24.20} & \textbf{100} & \lighttext{42.02}
& \lighttext{\underline{99.82}} & \lighttext{48.30} & \lighttext{\underline{98.84}} & \lighttext{70.38} \\
MELO & & \lighttext{91.60} & \lighttext{\underline{62.15}} & \lighttext{50.90} & \lighttext{\underline{66.01}}
& \lighttext{94.59} & \lighttext{\underline{69.80}} & \lighttext{95.02} & \lighttext{\underline{85.07}}
& \lighttext{86.54} & \lighttext{\underline{70.31}} & \lighttext{94.60} & \lighttext{\underline{82.35}} \\
\rowcolor{red!10}
\textbf{SCAN (Ours)} & & \lighttext{\underline{99.10}} & \textbf{82.60} & \lighttext{\underline{92.40}} & \textbf{90.96}
& \textbf{99.80} & \textbf{79.60} & \lighttext{\underline{99.80}} & \textbf{90.32}
& \textbf{100} & \textbf{81.33} & \textbf{92.01} & \textbf{90.62} \\
\bottomrule[1.5pt]
\end{tabular}
}
\end{table}

\section{Case study}

\subsection{More Interpretable Features}

We select and visualize the activation patterns of several additional interpretable features. These cases encompass both \textit{edit-specific} features tied to particular entities and \textit{general-purpose} features capturing broader semantic abstractions.
\begin{minipage}{\textwidth}
    \noindent\rule{\textwidth}{\lightrulewidth}
    \vspace{1mm}
    \begin{minipage}[c]{0.35\textwidth}
        \raggedright
        \textbf{ID:} 20\#13390 \\
        \textbf{Explanation:} \\
        {\footnotesize References to the company Amazon and/or Amazon branded products and services.}
    \end{minipage}
    \hfill
    \begin{minipage}[c]{0.62\textwidth}
        \centering
        \vspace{3.2mm}
        {\tiny \textsf{Reliability}} \\
        \includegraphics[width=\textwidth, height=1.15cm, keepaspectratio, trim=0 0.115cm 0 0.115cm, clip]{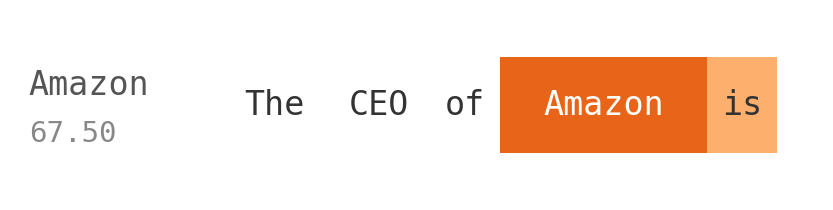} \\
        {\tiny \textsf{Generality}} \\
        \includegraphics[width=\textwidth, height=1.15cm, keepaspectratio, trim=0 0.115cm 0 0.115cm, clip]{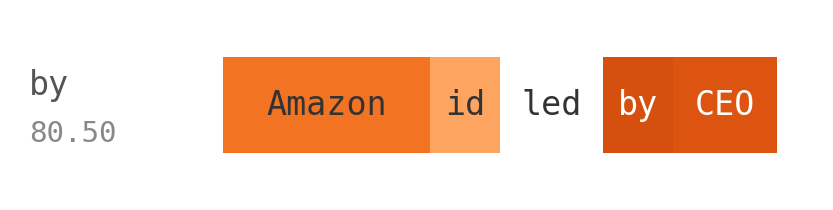} \\
        {\tiny \textsf{Locality}} \\
        \includegraphics[width=\textwidth, height=1.15cm, keepaspectratio, trim=0 0.115cm 0 0.115cm, clip]{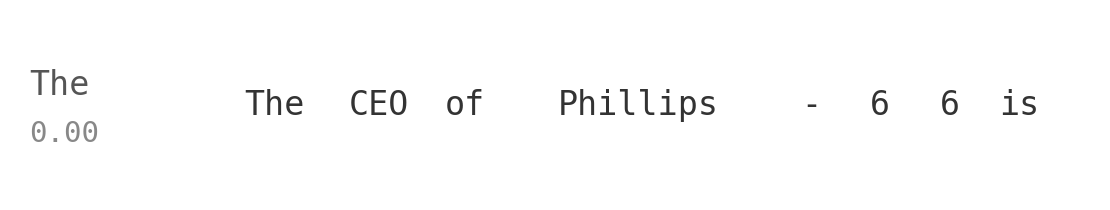}
    \end{minipage}
    \noindent\rule{\textwidth}{\lightrulewidth}
\end{minipage}

\noindent
\begin{minipage}{\textwidth}
    \noindent\rule{\textwidth}{\lightrulewidth}
    \vspace{1mm}
    \begin{minipage}[c]{0.35\textwidth}
        \raggedright
        \textbf{ID:} 20\#10092 \\
        \textbf{Explanation:} \\
        {\footnotesize Mentions of the president of the United States, particularly Obama and Trump, and political terms.}
    \end{minipage}
    \hfill
    \begin{minipage}[c]{0.62\textwidth}
        \centering
        \vspace{3.2mm}
        {\tiny \textsf{Reliability}} \\
        \includegraphics[width=\textwidth, height=1.15cm, keepaspectratio, trim=0 0.115cm 0 0.115cm, clip]{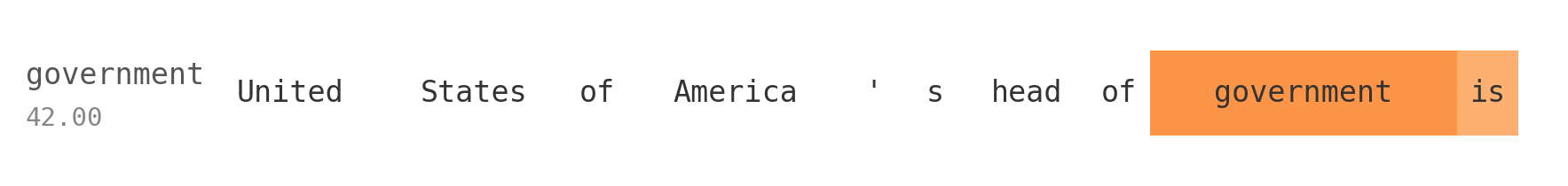} \\
        {\tiny \textsf{Generality}} \\
        \includegraphics[width=\textwidth, height=1.15cm, keepaspectratio, trim=0 0.115cm 0 0.115cm, clip]{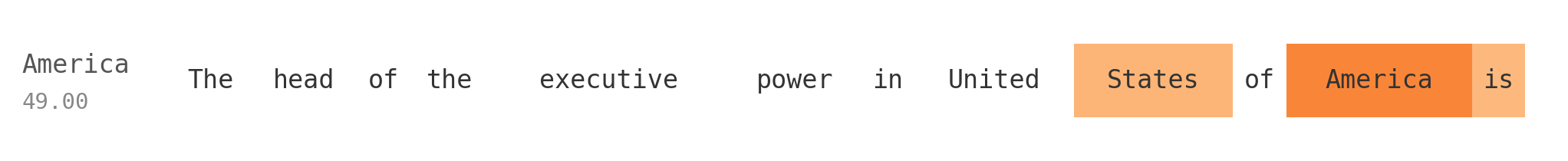} \\
        {\tiny \textsf{Locality}} \\
        \includegraphics[width=\textwidth, height=1.15cm, keepaspectratio, trim=0 0.115cm 0 0.115cm, clip]{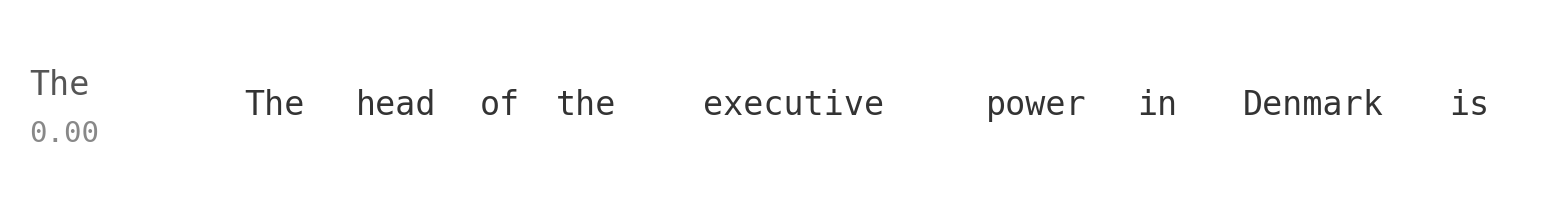}
    \end{minipage}
    \noindent\rule{\textwidth}{\lightrulewidth}
\end{minipage}

\noindent
\begin{minipage}{\textwidth}
    \noindent\rule{\textwidth}{\lightrulewidth}
    \vspace{1mm}
    \begin{minipage}[c]{0.35\textwidth}
        \raggedright
        \textbf{ID:} 21\#3435 \\
        \textbf{Explanation:} \\
        {\footnotesize Capital-letter abbreviations for official-sounding organizations or locations and titles for people.}
    \end{minipage}
    \hfill
    \begin{minipage}[c]{0.62\textwidth}
        \centering
        \vspace{3.2mm}
        {\tiny \textsf{Reliability}} \\
        \includegraphics[width=\textwidth, height=1.15cm, keepaspectratio, trim=0 0.115cm 0 0.115cm, clip]{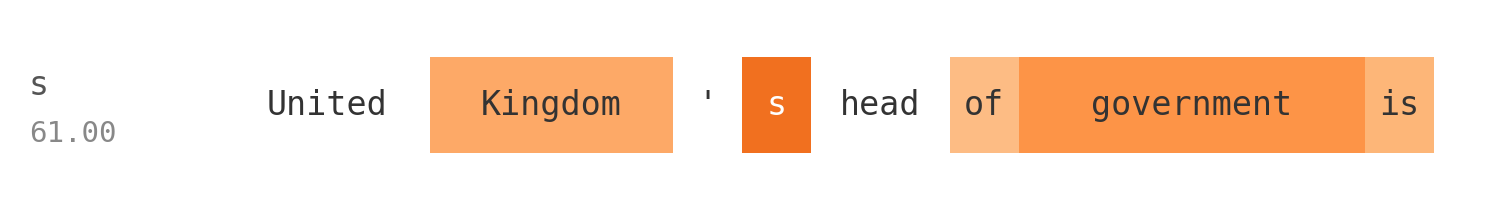} \\
        {\tiny \textsf{Generality}} \\
        \includegraphics[width=\textwidth, height=1.15cm, keepaspectratio, trim=0 0.115cm 0 0.115cm, clip]{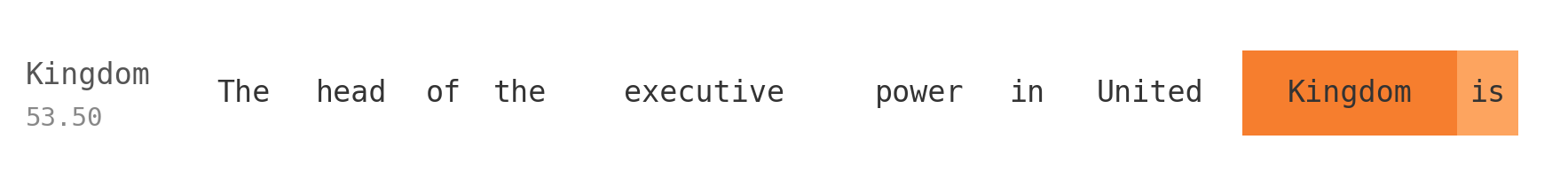} \\
        {\tiny \textsf{Locality}} \\
        \includegraphics[width=\textwidth, height=1.15cm, keepaspectratio, trim=0 0.115cm 0 0.115cm, clip]{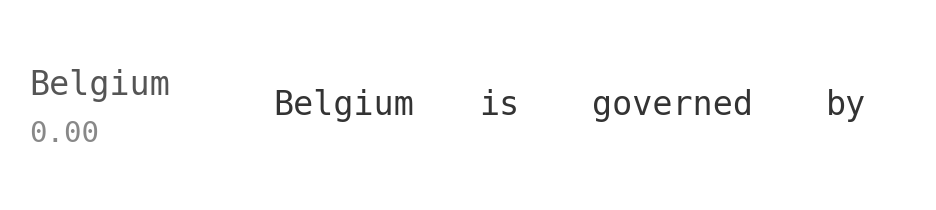}
    \end{minipage}
    \noindent\rule{\textwidth}{\lightrulewidth}
\end{minipage}

\noindent
\begin{minipage}{\textwidth}
    \noindent\rule{\textwidth}{\lightrulewidth}
    \vspace{1mm}
    \begin{minipage}[c]{0.35\textwidth}
        \raggedright
        \textbf{ID:} 19\#15383 \\
        \textbf{Explanation:} \\
        {\footnotesize References to athletes, specifically professional chess players, and terms related to sports participation.}
    \end{minipage}
    \hfill
    \begin{minipage}[c]{0.62\textwidth}
        \centering
        \vspace{3.2mm}
        {\tiny \textsf{Reliability}} \\
        \includegraphics[width=\textwidth, height=1.15cm, keepaspectratio, trim=0 0.115cm 0 0.115cm, clip]{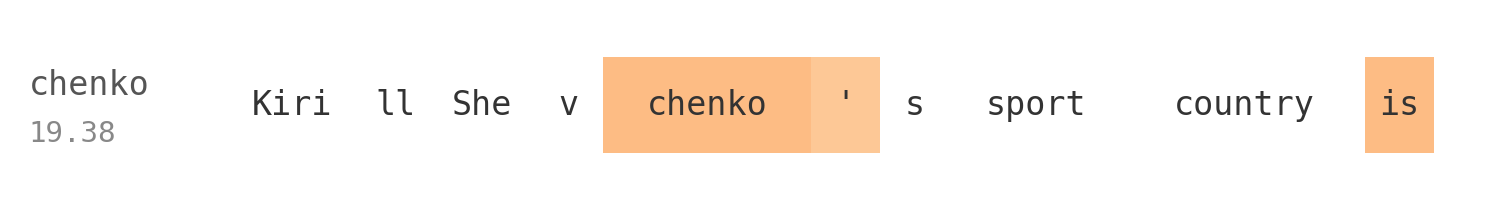} \\
        {\tiny \textsf{Generality}} \\
        \includegraphics[width=\textwidth, height=1.15cm, keepaspectratio, trim=0 0.115cm 0 0.115cm, clip]{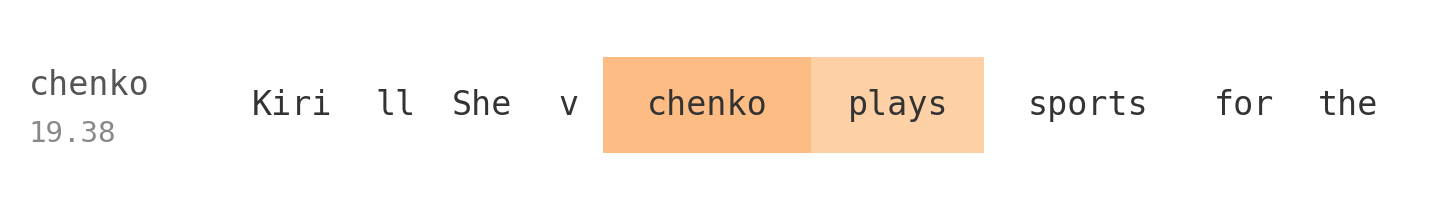} \\
        {\tiny \textsf{Locality}} \\
        \includegraphics[width=\textwidth, height=1.15cm, keepaspectratio, trim=0 0.115cm 0 0.115cm, clip]{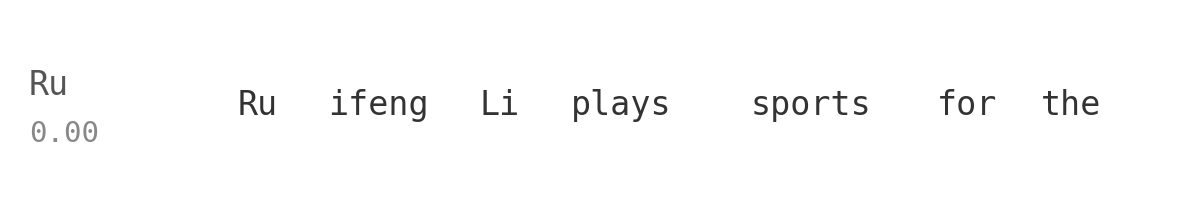}
    \end{minipage}
    \noindent\rule{\textwidth}{\lightrulewidth}
\end{minipage}

\noindent
\begin{minipage}{\textwidth}
    \noindent\rule{\textwidth}{\lightrulewidth}
    \vspace{1mm}
    \begin{minipage}[c]{0.35\textwidth}
        \raggedright
        \textbf{ID:} 19\#10892 \\
        \textbf{Explanation:} \\
        {\footnotesize Mentions of romantic relationships, partners, and well-known couples.}
    \end{minipage}
    \hfill
    \begin{minipage}[c]{0.62\textwidth}
        \centering
        \vspace{3.2mm}
        {\tiny \textsf{Reliability}} \\
        \includegraphics[width=\textwidth, height=1.15cm, keepaspectratio, trim=0 0.115cm 0 0.115cm, clip]{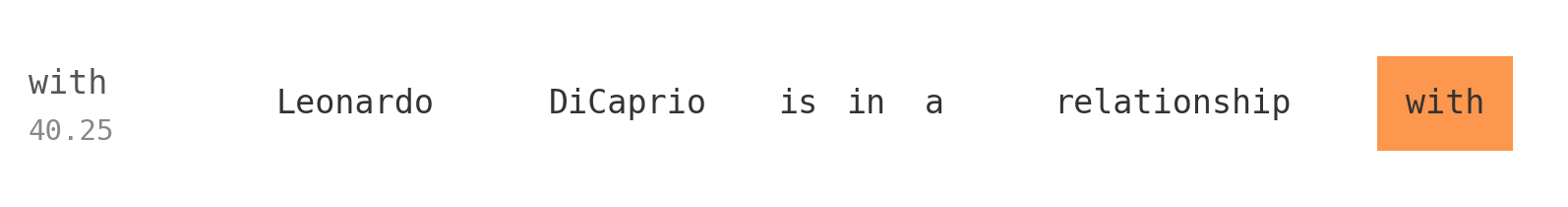} \\
        {\tiny \textsf{Generality}} \\
        \includegraphics[width=\textwidth, height=1.15cm, keepaspectratio, trim=0 0.115cm 0 0.115cm, clip]{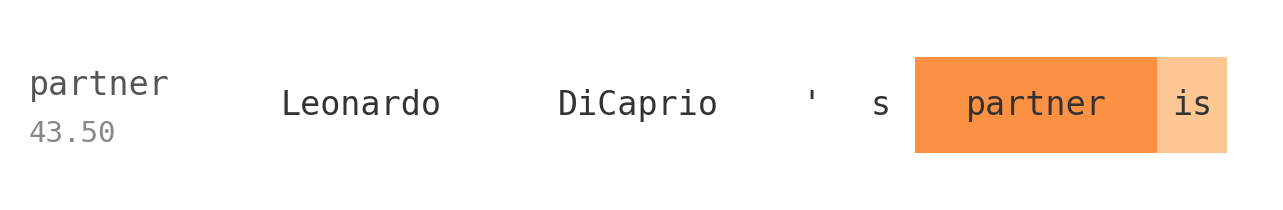} \\
        {\tiny \textsf{Locality}} \\
        \includegraphics[width=\textwidth, height=1.15cm, keepaspectratio, trim=0 0.115cm 0 0.115cm, clip]{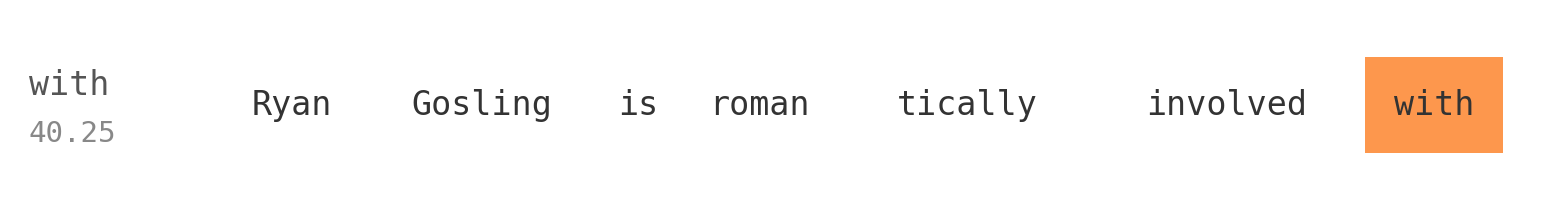}
    \end{minipage}
    \noindent\rule{\textwidth}{\lightrulewidth}
\end{minipage}

\noindent
\begin{minipage}{\textwidth}
    \noindent\rule{\textwidth}{\lightrulewidth}
    \vspace{1mm}
    \begin{minipage}[c]{0.35\textwidth}
        \raggedright
        \textbf{ID:} 22\#1328 \\
        \textbf{Explanation:} \\
        {\footnotesize References to manufacturing, corporate production, and specific automotive brands like Nissan.}
    \end{minipage}
    \hfill
    \begin{minipage}[c]{0.62\textwidth}
        \centering
        \vspace{3.2mm}
        {\tiny \textsf{Reliability}} \\
        \includegraphics[width=\textwidth, height=1.15cm, keepaspectratio, trim=0 0.115cm 0 0.115cm, clip]{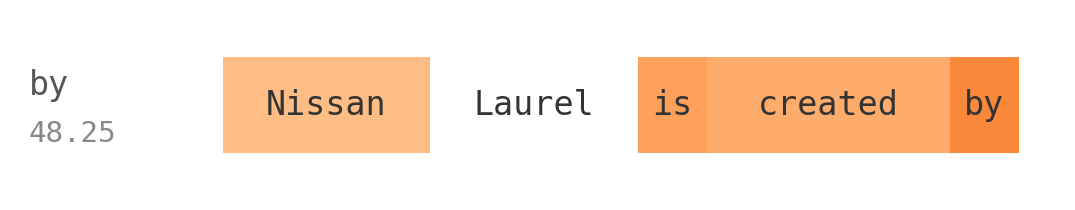} \\
        {\tiny \textsf{Generality}} \\
        \includegraphics[width=\textwidth, height=1.15cm, keepaspectratio, trim=0 0.115cm 0 0.115cm, clip]{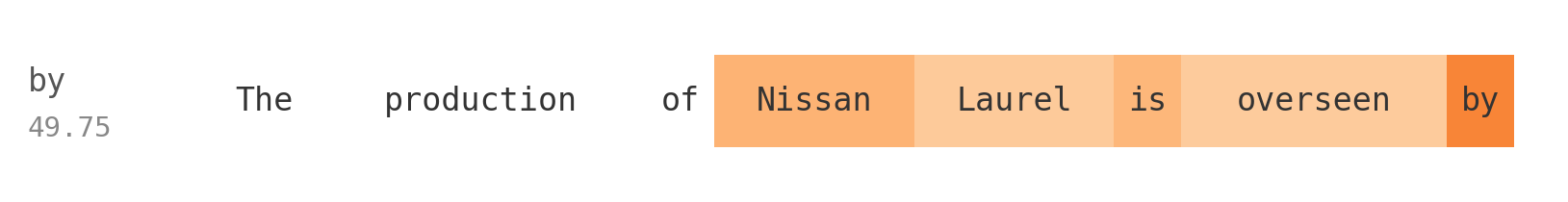} \\
        {\tiny \textsf{Locality}} \\
        \includegraphics[width=\textwidth, height=1.15cm, keepaspectratio, trim=0 0.115cm 0 0.115cm, clip]{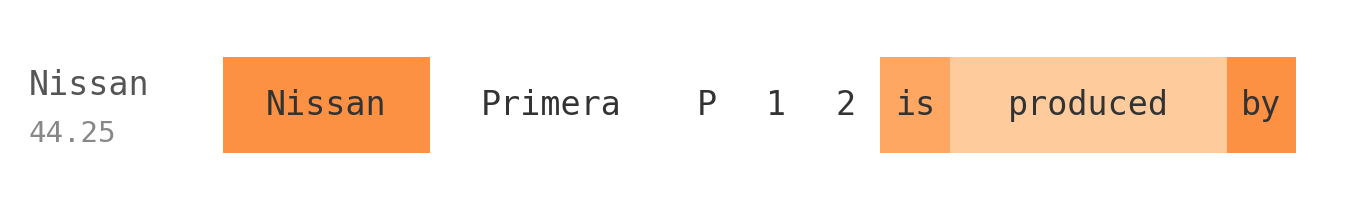}
    \end{minipage}
    \noindent\rule{\textwidth}{\lightrulewidth}
\end{minipage}

\noindent
\begin{minipage}{\textwidth}
    \noindent\rule{\textwidth}{\lightrulewidth}
    \vspace{1mm}
    \begin{minipage}[c]{0.35\textwidth}
        \raggedright
        \textbf{ID:} 19\#15849 \\
        \textbf{Explanation:} \\
        {\footnotesize References to U.S. politics, particularly Barack Obama and governmental structures.}
    \end{minipage}
    \hfill
    \begin{minipage}[c]{0.62\textwidth}
        \centering
        \vspace{3.2mm}
        {\tiny \textsf{Reliability}} \\
        \includegraphics[width=\textwidth, height=1.15cm, keepaspectratio, trim=0 0.115cm 0 0.115cm, clip]{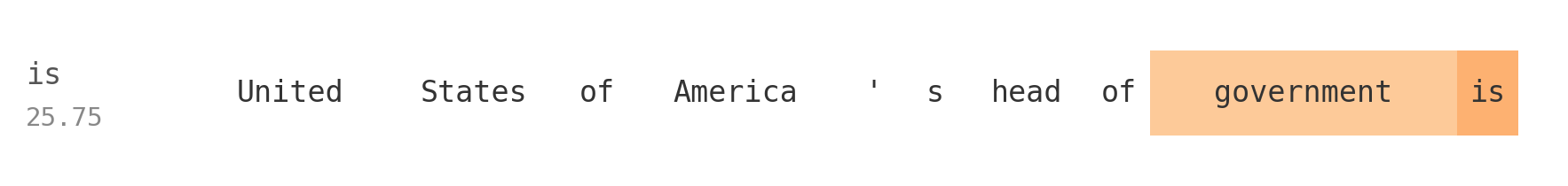} \\
        {\tiny \textsf{Generality}} \\
        \includegraphics[width=\textwidth, height=1.15cm, keepaspectratio, trim=0 0.115cm 0 0.115cm, clip]{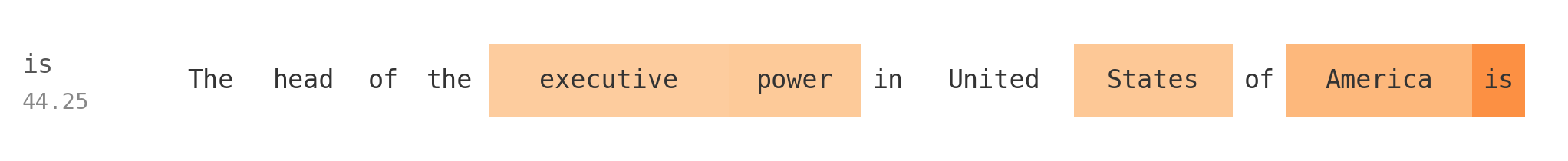} \\
        {\tiny \textsf{Locality}} \\
        \includegraphics[width=\textwidth, height=1.15cm, keepaspectratio, trim=0 0.115cm 0 0.115cm, clip]{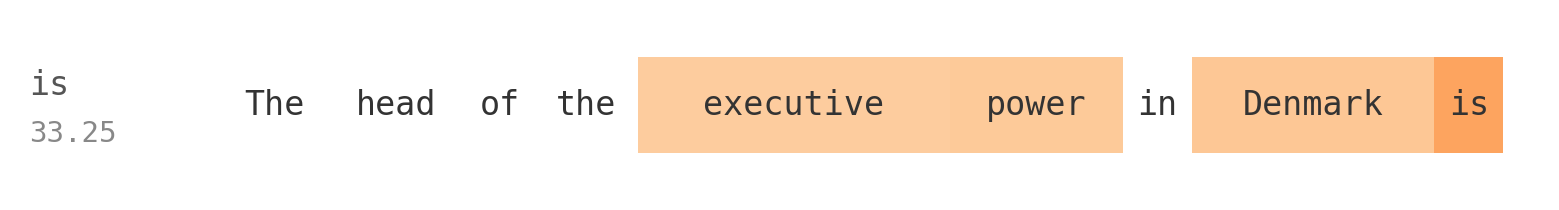}
    \end{minipage}
    \noindent\rule{\textwidth}{\lightrulewidth}
\end{minipage}

\noindent
\begin{minipage}{\textwidth}
    \noindent\rule{\textwidth}{\lightrulewidth}
    \vspace{1mm}
    \begin{minipage}[c]{0.35\textwidth}
        \raggedright
        \textbf{ID:} 20\#15360 \\
        \textbf{Explanation:} \\
        {\footnotesize Mentions of countries and governments, often in a political or geographical context.}
    \end{minipage}
    \hfill
    \begin{minipage}[c]{0.62\textwidth}
        \centering
        \vspace{3.2mm}
        {\tiny \textsf{Reliability}} \\
        \includegraphics[width=\textwidth, height=1.15cm, keepaspectratio, trim=0 0.115cm 0 0.115cm, clip]{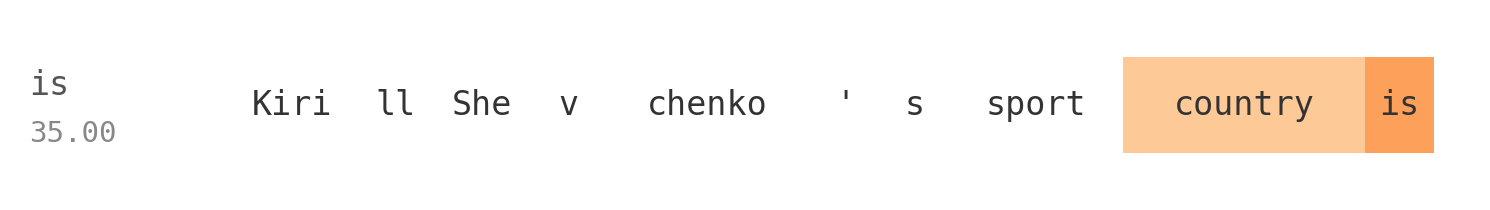} \\
        {\tiny \textsf{Generality}} \\
        \includegraphics[width=\textwidth, height=1.15cm, keepaspectratio, trim=0 0.115cm 0 0.115cm, clip]{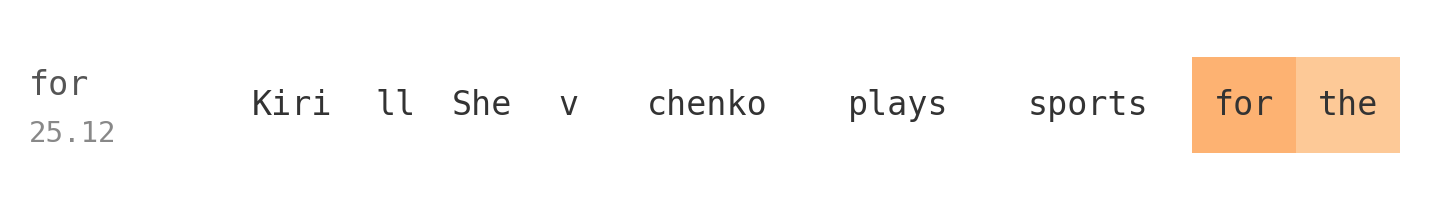} \\
        {\tiny \textsf{Locality}} \\
        \includegraphics[width=\textwidth, height=1.15cm, keepaspectratio, trim=0 0.115cm 0 0.115cm, clip]{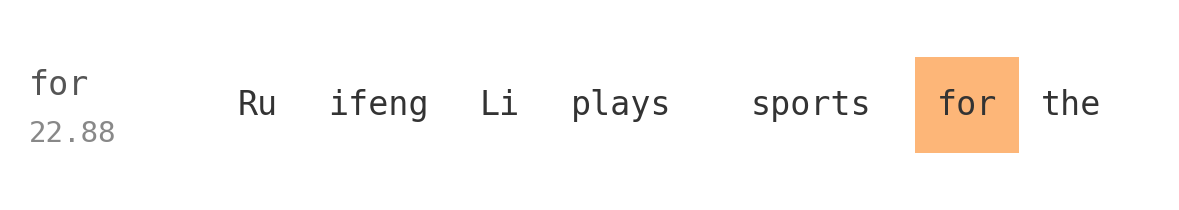}
    \end{minipage}
    \noindent\rule{\textwidth}{\lightrulewidth}
\end{minipage}

\noindent
\begin{minipage}{\textwidth}
    \noindent\rule{\textwidth}{\lightrulewidth}
    \vspace{1mm}
    \begin{minipage}[c]{0.35\textwidth}
        \raggedright
        \textbf{ID:} 19\#1263 \\
        \textbf{Explanation:} \\
        {\footnotesize Political figures or bodies, particularly related to the US government and local offices.}
    \end{minipage}
    \hfill
    \begin{minipage}[c]{0.62\textwidth}
        \centering
        \vspace{3.2mm}
        {\tiny \textsf{Reliability}} \\
        \includegraphics[width=\textwidth, height=1.15cm, keepaspectratio, trim=0 0.115cm 0 0.115cm, clip]{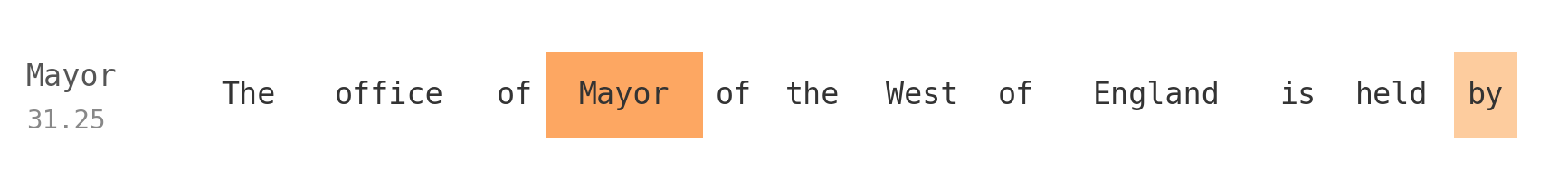} \\
        {\tiny \textsf{Generality}} \\
        \includegraphics[width=\textwidth, height=1.15cm, keepaspectratio, trim=0 0.115cm 0 0.115cm, clip]{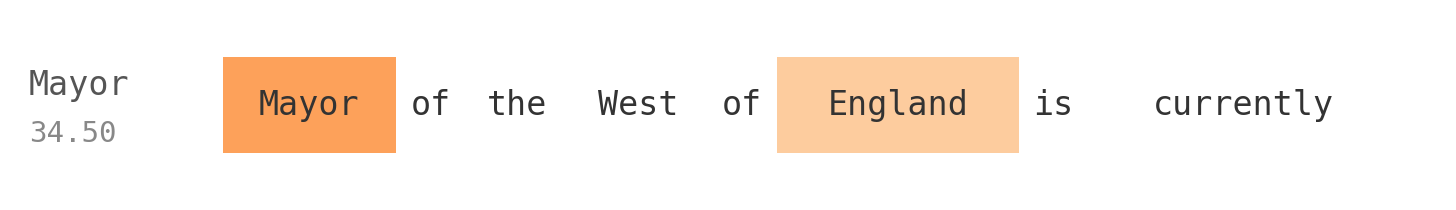} \\
        {\tiny \textsf{Locality}} \\
        \includegraphics[width=\textwidth, height=1.15cm, keepaspectratio, trim=0 0.115cm 0 0.115cm, clip]{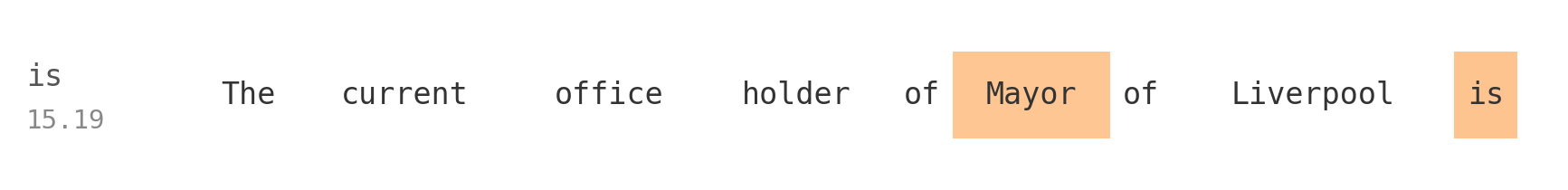}
    \end{minipage}
    \noindent\rule{\textwidth}{\lightrulewidth}
\end{minipage}

\noindent
\begin{minipage}{\textwidth}
    \noindent\rule{\textwidth}{\lightrulewidth}
    \vspace{1mm}
    \begin{minipage}[c]{0.35\textwidth}
        \raggedright
        \textbf{ID:} 19\#14002 \\
        \textbf{Explanation:} \\
        {\footnotesize Mentions of religion, historical figures, and religious affiliations.}
    \end{minipage}
    \hfill
    \begin{minipage}[c]{0.62\textwidth}
        \centering
        \vspace{3.2mm}
        {\tiny \textsf{Reliability}} \\
        \includegraphics[width=\textwidth, height=1.15cm, keepaspectratio, trim=0 0.115cm 0 0.115cm, clip]{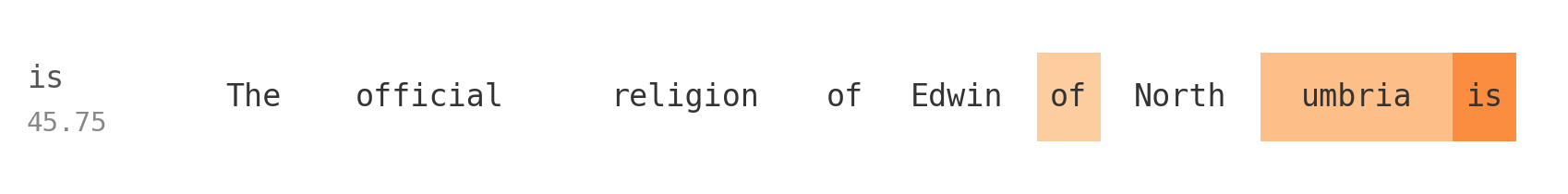} \\
        {\tiny \textsf{Generality}} \\
        \includegraphics[width=\textwidth, height=1.15cm, keepaspectratio, trim=0 0.115cm 0 0.115cm, clip]{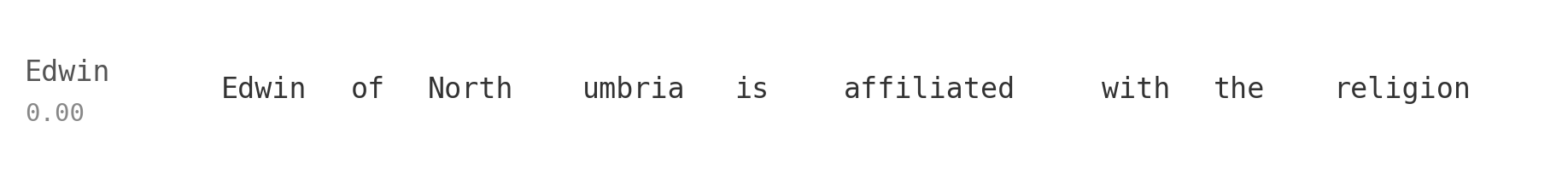} \\
        {\tiny \textsf{Locality}} \\
        \includegraphics[width=\textwidth, height=1.15cm, keepaspectratio, trim=0 0.115cm 0 0.115cm, clip]{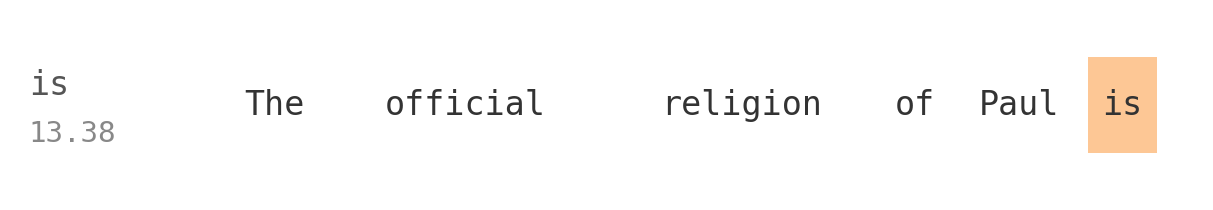}
    \end{minipage}
    \noindent\rule{\textwidth}{\lightrulewidth}
\end{minipage}


\end{document}